\newtheorem{definition}{Definition}[section]
\newcommand{\bracket}[3]{\left#1 #3 \right#2}
\renewcommand{\b}{\bracket{(}{)}}
\newcommand{\sqb}{\bracket{[}{]}}
\newcommand{\N}{\mathcal{N}\b}
\newcommand{\bareE}{\operatorname{E}}
\newcommand{\E}[1][]{\bareE_{#1}\sqb}
\title{Scale-invariant attention}
\author{%
  Ben Anson\\
  School of Mathematics\\
  University of Bristol\\
  \texttt{ben.anson@bristol.ac.uk}
  \And
  Xi Wang\\
  Department of Computer Science\\
  Johns Hopkins University\\
  \texttt{xidulu@gmail.com}
  \And
  Laurence Aitchison\\
  School of Computer Science\\
  University of Bristol\\
  \texttt{laurence.aitchison@bristol.ac.uk}
}
\begin{document}
\maketitle
\newcommand{\prope}[0]{$p$-RoPE}

\begin{abstract}
One persistent challenge in LLM research is the development of attention mechanisms that are able to generalise from training on shorter contexts to inference on longer contexts.
We propose two conditions that we expect all effective long-context attention mechanisms to have: scale-invariant total attention, and scale-invariant attention sparsity.
Under a Gaussian assumption, we show that a simple position-dependent transformation of the attention logits is sufficient for these conditions to hold.
Experimentally we find that the resulting scale-invariant attention scheme gives considerable benefits in terms of validation loss when zero-shot generalising from training on short contexts to validation on longer contexts, and is effective at long-context retrieval.
\end{abstract}
\section{Introduction}
One key challenge in modern LLMs is scaling up context length at inference time, while maintaining model performance. We approach this question of length generalisation by considering scale invariance.
In particular, we are inspired by the ``scale-invariant'' statistics of natural images \citep{van1996modelling}.
Scale invariance, for images, is actually highly intuitive, and means that there is structure at all spatial scales.
For example, in an image there might be big features that are 100--1000 pixels across, and some small features that are only 1--10 pixels across.
In natural images, features at both spatial scales are important: you cannot remove features at either scale without radically altering the image \citep{van1996modelling}.

For attention over text, instead of pixels, we considered {\it token ranges\/} at different scales:
\begin{itemize}[topsep=0pt, parsep=0pt, itemsep=1.5pt]
  \item 1--10 tokens in the past (e.g.\ those in the same sentence),
  \item 10--100 tokens in the past (e.g.\ those in the same paragraph),
  \item 100--1,000 tokens in the past (e.g.\ those in the same section),
  \item 1,000--10,000 tokens in the past (e.g.\ those in the same document), and so on.
\end{itemize}
With this in mind, we define scale-invariant attention as any attention scheme that satisfies two desiderata: scale-invariant total attention and scale-invariant attention sparsity.

\begin{figure}[t]
    \centering
    \includegraphics[width=\linewidth]{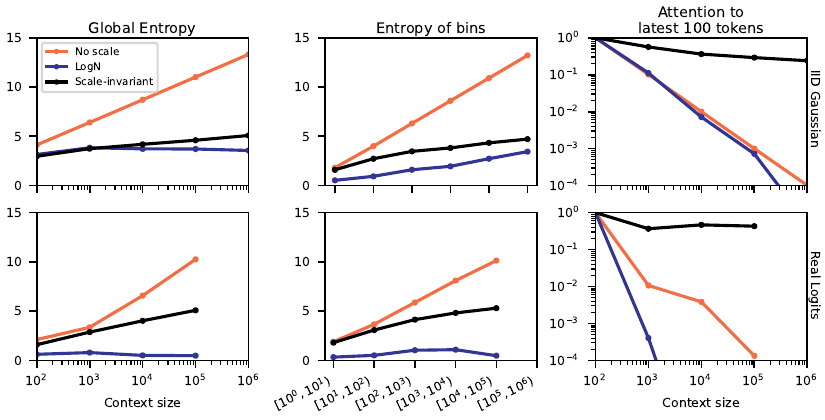}\\[-1.5ex]
    \caption{
    \textbf{Scale-invariant attention controls the entropy without sacrificing attention over the local context.} We consider three metrics for attention schemes: (\emph{left}) the global attention entropy, (\emph{middle}) entropy within particular ranges of tokens (e.g.\ 10--100), and (\emph{right}) total attention to the previous 100 tokens.
    The top row uses IID Gaussian logits, following our theoretical approach in Sec.~\ref{sec:logit_characteristics}. For LogN, the IID logits are multiplied by $s\log N$, where $N$ is the sequence length and $s=0.4$.
    The bottom row uses attention logits sampled from models trained with $p$-RoPE and  `No scale', LogN, and our scale-invariant transform.
    With no logit scaling, the attention becomes increasingly diffuse as the context grows (i.e. the distribution over logits has high entropy).
    LogN scaling reduces the entropy and thus ensures that attention remains sparse even at longer contexts.  However, LogN still forfeits the ability to attend to the local context (e.g.\ 100 most recent) tokens. Scale-invariant attention strikes a balance between low entropy and attending over the local context.}
    \label{fig:entropy}
\end{figure}

\textbf{Scale-invariant total attention} is the property that the sum of attention weights in each of the above ranges is roughly similar.
Intuitively, that means that the model attends to both the local context (e.g.\ 10--100 tokens ago) while at the same time taking account of information from the global context (e.g.\ 1,000--10,000 tokens ago).
Scale-invariant total attention addresses a key issue with attention mechanisms: as the context gets longer, models tend to pay more attention to distant tokens at the expense of the local context (e.g. Fig.~\ref{fig:entropy}, right column; the blue and orange lines decay quickly to zero).
While scale-invariant total attention doesn't entirely eliminate that issue, it does ensure that the attention paid to the local context shrinks only very slowly as the context length grows (e.g. Fig.~\ref{fig:entropy}, right column; the black lines decay to zero much more slowly).

Total attention tells us the amount of attention a certain region receives.
However, scale invariance of the total attention does not tell us about how the attention will be distributed among tokens in the region, i.e.\ whether the attention is spread out among many tokens or concentrated onto only a few tokens.
As the region gets wider (e.g.\ from 10--100 tokens ago to 10,000--100,000 tokens ago),
we might expect attention to spread out over more tokens simply due to the increased number of tokens.
This ``spreading out'' of attention is possibly suboptimal for large contexts~\citep{nakanishi2025scalable}, and that instead, we should focus attention on only a few of the most relevant tokens.
To measure these effects, we use the entropy, which roughly captures the logarithm of the number of tokens attended to.

\textbf{Scale-invariant attention sparsity} captures this notion.
In particular, we define two kinds of scale-invariant attention sparsity.
\textbf{Strong} scale-invariant attention sparsity implies that the number of tokens attended to in each region is constant.
For example, if the model attends to 8 tokens in the 10--100 token range, strong scale-invariant attention sparsity says it will also attend to approximately 8 tokens in the 1,000--10,000 token range.
Strong scale-invariant attention sparsity implies an extreme increase in sparsity as the context gets longer that may be difficult to achieve with practical attention mechanisms.
We therefore also introduce \textbf{weak} scale-invariant attention sparsity, which simply states that the sparsity increases as the context gets longer (i.e.\ attention is relatively dense in the region from 10--100 tokens, and much sparser in the region from 1,000--10,000, but you still attend to more tokens in the 1,000--10,000 region than the 10--100 region).

Our main contributions are:
\begin{itemize}[leftmargin=*, topsep=0pt, parsep=0pt, itemsep=1.5pt]
\item We introduce the concepts of scale-invariant total attention, and weak and strong scale-invariant attention sparsity as desirable properties when attending over long contexts.
\item We derive a simple, position-dependent transformation of attention logits that provably satisfies scale-invariant total attention, and empirically satisfies weak scale-invariant attention sparsity for Gaussian logits.
\item We implement scale-invariant attention in conjunction with $p$-RoPE~\citep{barbero2024round}. We show that our method, `scale-invariant $p$-RoPE', exhibits improvements in validation loss both when doing long-context training, and when zero-shot generalising to longer contexts. Our method also matches the performance of the best alternatives in an out-of-distribution long context `needle-in-a-haystack' task.
\end{itemize}
\section{Related work}
Handling long sequences effectively in Transformer-based models remains a significant challenge and is of high interest to the deep learning community~\citep{bai2024longalign,ye2024mplug,jin2024llm,beltagy2020longformer,ding2023longnet,munkhdalai2024leave,bulatov2024beyond,liu2023ring,barbero2024transformers,hu2024longrecipe}. Below, we discuss several strategies in this literature relating to our work.

{\bf Long contexts via the positional encoding}: Methods like ALiBi~\citep{press2021train} introduce a static, causal bias directly into the attention logits. ALiBi endows the model with an inductive bias towards recent tokens~\citep{kazemnejad2023impact}, thus helping with longer contexts. Rotary Position Embeddings (RoPE)~\citep{su2024roformer} have emerged as the most popular position encoding scheme, encoding relative positions. RoPE does not generalise to longer sequences out of the box~\citep{peng2023yarn}, leading to subsequent research improving RoPE's length extrapolation capabilities~\citep{zhu2023pose,wang2024resonance}. Many achieve this by modifying RoPE's frequency spectrum; for example, Positional Interpolation (PI)~\citep{chen2023extending}, NTK-aware scaling~\citep{bloc97_2023}, YaRN~\citep{peng2023yarn}, and by simply increasing the RoPE base $\theta$ parameter~\citep{grattafiori2024llama,team2025gemma,barbero2024round,liu2023scaling}.

{\bf Entropy Control}: Beyond positional information, the properties of the attention distribution itself are crucial, especially in long contexts where attention scores can ``smear''/``spread out'' across many tokens.
Scalable-Softmax (SSMax), also known as the `LogN trick'~\citep{nakanishi2025scalable,chiang2022overcoming,kexuefm-8823,bai2023qwen,llama4team2025}, addresses this by multiplying the attention logits by $s\log N$, where $N$ is the context length and $s$ is a learned scale parameter. This multiplier has the effect of sharpening/focusing the attention distribution.
\citet{li2025information} adopt a similar approach: controlling the entropy of the attention distribution for better length generalization.

These prior works are perhaps the most similar to our approach.
However, the key issue with such approaches is that they treat the local context (e.g.\ previous 10--100 tokens) in the same way as the global context (e.g.\ 10,000 tokens ago).
As the context gets larger, the attention paid to the 100 most recent tokens drops rapidly for the prior approaches, but stays markedly more consistent with scale-invariant attention (e.g.\ for LogN see Fig.~\ref{fig:entropy}).
In contrast, we started off by carefully specifying how we wanted attention to behave in the local and global contexts (Sec.~\ref{sec:defs}) by giving the scale-invariant total attention and scale-invariant attention sparsity desiderata. This means that e.g. LogN has a position-independent multiplicative bias, whereas our approach has position-dependent multiplicative and additive biases for the logits.

{\bf Efficient Long-Context Training and Inference}:
A key observation of~\cite{xiong2023effective} is that continual pretraining on long contexts, after initial pretraining on shorter sequences, is often sufficient and much more computationally efficient.
Thus, a simple strategy for improving long-context performance, given sufficient computational budget, is continual pretraining on longer contexts.
This has been adopted widely~\citep{grattafiori2024llama,gao2024train,lieber2024jamba,yang2024qwen2,cohere2025command,llama4team2025,liu2024deepseek}.
Of course, this strategy considerably increases the complexity and memory cost of the pretraining pipeline, making approaches like ours that can zero-shot generalise very valuable.
Furthermore, one might expect long-context training to be far easier (e.g.\ requiring fewer long-context training steps for optimal performance) for approaches that already have good long-context performance due to zero-shot generalisation.

For inference on sequences exceeding the trained context length, alternative strategies bypass applying dense attention over the entire context, allowing for `infinite attention'~\citep{munkhdalai2024leave,liu2024reattention,martins2021infty,chen2025edgeinfinite,ding2023longnet}. These include maintaining a fixed-size attention window~\citep{beltagy2020longformer}, retrieving relevant context tokens using heuristics like top-K proximity~\citep{han2023lm}, or vector similarity search akin to episodic memory systems~\citep{fountas2024human,xiao2024infllm}. While effective, these approaches operate at a higher level, managing context rather than modifying the core attention mechanism's ability to process it directly, which is the focus of our work.

\section{Methods}
\label{sec:methods}
Following FlexAttention \citep{dong2024flex}, we define the attention ``score'' as the dot product of a query $q$ and keys $K$,
\begin{align}
  \label{eq:def:S}
  S_t &= \frac{1}{\sqrt{d}} \sum_{\lambda=1}^d q_{\lambda} K_{t\lambda}.
\end{align}
Here, $d$ is the head dimension and $\lambda\in\{1, \ldots, d\}$ indexes the feature, and $t$ indexes the position.
We are using an unusual form for the sequence indices, but this simplifies notation later.
Specifically, we consider a single, fixed query token $q$.
Then, $t\geq 0$ into the keys/values from previous tokens.
Critically, $t$ counts backwards from the query token, e.g. $t=1$ indicates the previous token.
Logits are computed by applying an attention modifier function to the score,
\begin{align}
  \label{eq:def:L}
  L_t &= L(S_t; t).
\end{align}
Here, $L_t$ is the actual value of the attention logits for the token $t$ steps back from the query, while $L(S_t; t)$ is the function used to compute those logits.
Thus, $L(S_t; t) = S_t$ recovers standard attention.

For use later, we define the unnormalised attention weights $\tilde{A}_t$, (normalised) attention weights $A_t$, and normalisers $Z$, for a sequence of length $T$, as,
\begin{align}
  \label{eq:def:AAZ}
  \tilde{A}_t &= \exp\b{L_t} &
  A_t &= \frac{\tilde{A}_t}{Z} &
  Z &= \sum_{t=1}^T \tilde{A}_t.
\end{align}
\subsection{Formal definitions of scale-invariant total attention and attention sparsity}
\label{sec:defs}
\textbf{Scale-invariant total attention.}
In our examples we have considered ranges of tokens, 10--100, 100--1,000, 1,000--10,000, 10,000--100,000, etc..
Scale-invariant total attention is the property that the total attention in each of these ranges is somewhat similar, such that the attention allotted to any one range does not dominate the others. We give a formal definition in Def.~\ref{def:scale_inv_total_attn}, where using $\Delta=10$ gives the ranges from the examples.
\begin{definition}[Scale-invariant total attention]\label{def:scale_inv_total_attn}
Consider a set of random variables, $\{L_t\}$, representing attention logits.
The total unnormalised attention in range $t_1$ to $t_2$ is,
\begin{align}
  \label{eq:def:Z}
  Z_{t_1}^{t_2} &= \sum_{t=t_1}^{t_2-1} \tilde{A}_t = \sum_{t=t_1}^{t_2-1} \exp(L_t).
\end{align}
We say that the total attention is scale-invariant if, for any integer $\Delta > 1$, the expected total unnormalised attention in the range $\{t, t+1,\ldots, t\Delta - 1\}$ is asymptotically constant. That is,
\begin{align}
  \label{eq:scale_inv_Z}
  \E{Z_{t}^{t \Delta}} = \Theta(1) \quad \text{ as } \quad t \rightarrow \infty.
\end{align}
\end{definition}
The ``$\Theta(1)$'' is big-$\Theta$ notation, and means there exist constants $c_1, c_2 > 0$ and $t_0$ such that for all $t > t_0$, $c_1 \leq \E{Z_t^{t\Delta}} \leq c_2$.

\textbf{Scale-invariant unnormalised attention sparsity.}
Remember that scale-invariant attention sparsity means that attention should be denser for the local context (e.g.\ 10--100 tokens ago) and sparser for the global context (e.g.\ 1,000--10,000) tokens ago.
To measure attention sparsity, we consider the number of tokens attended to in a region.
To evaluate the number of tokens attended to, one approach is to use the entropy over tokens.
We measure the sparsity in the region from $t_1$ to $t_2$, using the entropy for the distribution over tokens in this region,
\begin{align}
  \label{eq:def:Htt}
  H_{t_1}^{t_2} &= - \sum_{t=t_1}^{t_2-1} \frac{\tilde{A}_t}{Z_{t_1}^{t_2}} \log \left(\frac{\tilde{A}_t}{Z_{t_1}^{t_2}}\right) = - \frac{\sum_{t=t_1}^{t_2-1} \tilde{A}_t \log \tilde{A}_t}{Z_{t_1}^{t_2}} + \frac{\log Z_{t_1}^{t_2} \sum_{t=t_1}^{t_2-1} \tilde{A}_t}{Z_{t_1}^{t_2}},
\end{align}
where $A_t$ is defined in Eq.~\eqref{eq:def:AAZ}.
Defining the unnormalised negentropy as,
\begin{align}
  \label{eq:def:N}
  \tilde{N}_{t_1}^{t_2} = \sum_{t=t_1}^{t_2-1} \tilde{A}_t \log \tilde{A}_t,
\end{align}
and remembering the definition $Z_{t_1}^{t_2}$ (Eq.~\ref{eq:def:Z}), we can then write the entropy in range $t_1$ to $t_2$ as,
\begin{align}
  \label{eq:def:Htt_negent}
  H_{t_1}^{t_2} &= - \frac{\tilde{N}_{t_1}^{t_2}}{Z_{t_1}^{t_2}} - \log Z_{t_1}^{t_2}.
\end{align}
Thus, it seems reasonable to consider the behavior of the unnormalised negentropy (Eq.~\ref{eq:def:N}), because if $\tilde{N}_{t}^{t \Delta}$ and $Z_{t}^{t \Delta}$ are asymptotically constant as $t \rightarrow \infty$, then by Eq.~\eqref{eq:def:Htt_negent} we expect $H_{t_1}^{t_2}$ to also be asymptotically constant.
Thus we define:

\begin{definition}[Scale-invariant unnormalised attention sparsity]\label{def:scale_inv_unnorm_attn_sparsity}
Consider a set of random variables, $\{L_t\}$, representing attention logits. We say that we have scale-invariant unnormalised attention sparsity if, for any integer $\Delta>0$,
\begin{align}
  \E{\tilde{N}_{t}^{t \Delta}} = \Theta(1) \quad \text{ as } \quad t \rightarrow \infty.
\end{align}
where $\tilde{N}_{t}^{t \Delta}$ is the unnormalised negentropy (Eq.~\ref{eq:def:N}).
\end{definition}
\begin{figure}[t]
    \centering
    \includegraphics[width=\linewidth]{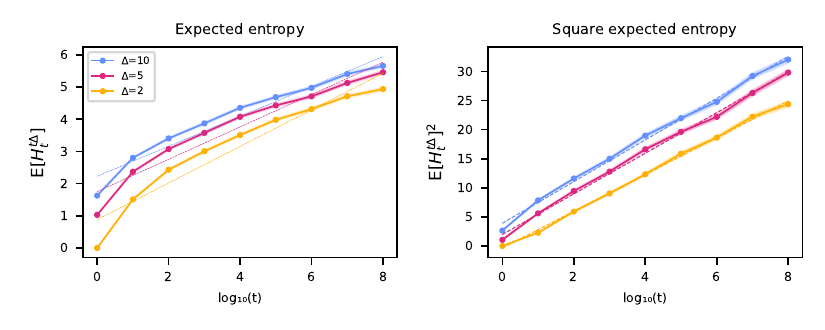}\\[-1.8ex]
    \caption{\textbf{Expected entropy of scale-invariant attention at different scales is sub-logarithmic}. Here, we sample sequences of independent standard Gaussian logits, and apply the scale-invariant attention transformation. We estimate the expected entropy in ranges $[t, t\Delta)$, where the size of the range is controlled by $t$ (x-axis) and $\Delta$ (line color). We see that this expected entropy measure scales sub-logarithmically (left), and with the right plot suggesting a $\sim \sqrt{\log(t)}$ scaling. The dashed lines show a best linear fit.}
    \label{fig:sublog}
    \vspace{-0.35cm}
\end{figure}
\textbf{Weak and strong scale-invariant attention sparsity.}
While the argument above suggests that scale-invariant unnormalised attention sparsity is an important property, ultimately we are interested in giving formal definitions of weak and strong attention sparsity.

We define weak scale-invariant attention sparsity such that as input lengths increase --- for example, from 10--100 to 100--1,000 tokens --- the number of attended tokens grows sublinearly. In contrast, standard attention with unscaled logits yields linear growth. Since entropy is roughly the log of the number of attended tokens (a uniform distribution over $k$ tokens has entropy equal to $\log k$), weak sparsity requires sublinear growth in entropy with respect to $\log(t)$ (see Definition~\ref{def:weak_scale_inv_attn_sparsity}).

\begin{definition}[Weak scale-invariant attention sparsity]\label{def:weak_scale_inv_attn_sparsity}
Consider a set of random variables, $\{L_t\}$, representing attention logits.
We say that the attention sparsity is weakly scale-invariant if, for any integer $\Delta > 1$,
\begin{align}
  \label{eq:weak_scale_inv_H}
  \E{H_{t}^{t \Delta}} &= o(\log t) \quad \text{ as } \quad t \rightarrow \infty.
\end{align}
\end{definition}
Remember $ o(\log(t))$ is `little-$o$' notation which means that $\E{H_t^{t \Delta}}$ scales strictly slower than $\log(t)$.
Strong scale-invariant attention sparsity implies that the number of tokens attended to is asymptotically constant as we go from e.g.\ the past 10--100 tokens to the past 1,000--10,000.

\begin{definition}[Strong scale-invariant attention sparsity]\label{def:strong_strong_scale_inv_attn_sparsity}
Consider a set of random variables, $\{L_t\}$, representing attention logits.
We say that the attention sparsity is strongly scale-invariant if, for any integer $\Delta > 1$,
\begin{align}
  \label{eq:strong_scale_inv_H}
  \E{H_{t}^{t \Delta}} &= \Theta(1) \quad \text{ as } \quad t \rightarrow \infty,
\end{align}
i.e.\ we expect $H_{t}^{t \Delta}$ to be asymptotically constant as $t \rightarrow \infty$.
\end{definition}
\subsection{What characteristics of the logits are required for scale-invariant attention?}\label{sec:logit_characteristics}
Next, we ask what properties would be sufficient for scale-invariant total attention and for some form of scale-invariant attention sparsity.
Lemma~\ref{lemma:total_attn} gives scale-invariant total attention, and Lemma~\ref{lemma:unnormalized_sparsity} gives scale-invariant unnormalised attention (see Appendix~\ref{app:proofs_lemma1_2} for the proofs).
\begin{restatable}{lemma}{totalattnlemma}\label{lemma:total_attn}
Consider a set of random variables, $\{L_t\}$, representing attention logits. Let $\tau>0$ be a lengthscale parameter, and $\alpha>0$ a multiplicative constant.
If the attention logits satisfy,
\begin{align}
  \label{eq:EA}
  \E{\tilde{A}_t} = \frac{\alpha}{t/\tau + 1},
\end{align}
then we have scale-invariant total attention (Def.~\ref{def:scale_inv_total_attn}).
\end{restatable}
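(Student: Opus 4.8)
The plan is to reduce $\E{Z_t^{t\Delta}}$ to a partial sum of a harmonic-type series and then control that sum by integral comparison. First I would apply linearity of expectation to the definition in Eq.~\eqref{eq:def:Z}, writing (using a fresh summation index $s$ to avoid clashing with the window endpoint $t$)
\begin{align}
  \E{Z_t^{t\Delta}} = \sum_{s=t}^{t\Delta-1} \E{\tilde{A}_s} = \sum_{s=t}^{t\Delta-1} \frac{\alpha}{s/\tau+1} = \alpha\tau \sum_{s=t}^{t\Delta-1} \frac{1}{s+\tau},
\end{align}
where the middle equality is exactly the hypothesis Eq.~\eqref{eq:EA}. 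This converts the probabilistic claim into a deterministic question about the growth of a shifted harmonic sum whose window $[t, t\Delta)$ grows geometrically with $t$.

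Second, I would bound $\sum_{s=t}^{t\Delta-1} (s+\tau)^{-1}$ above and below by integrals of the decreasing function $f(s)=(s+\tau)^{-1}$. The standard monotone comparison gives
\begin{align}
  \int_t^{t\Delta}\frac{ds}{s+\tau} \;\le\; \sum_{s=t}^{t\Delta-1}\frac{1}{s+\tau} \;\le\; \frac{1}{t+\tau} + \int_t^{t\Delta-1}\frac{ds}{s+\tau},
\end{align}
and evaluating the integrals yields the lower bound $\log\frac{t\Delta+\tau}{t+\tau}$ and the upper bound $\frac{1}{t+\tau}+\log\frac{t\Delta-1+\tau}{t+\tau}$. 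Both tend to $\log\Delta$ as $t\to\infty$ (numerators grow like $t\Delta$, denominators like $t$, and the stray $1/(t+\tau)$ term vanishes), so $\sum_{s=t}^{t\Delta-1}(s+\tau)^{-1}\to\log\Delta$ and hence $\E{Z_t^{t\Delta}}\to\alpha\tau\log\Delta$.

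Third, I would conclude. Since $\Delta>1$ we have $\log\Delta>0$, so the limit $\alpha\tau\log\Delta$ is a strictly positive finite constant; any sequence converging to a positive limit is eventually squeezed between two positive constants, which is exactly the $\Theta(1)$ condition spelled out after Def.~\ref{def:scale_inv_total_attn}. Concretely one may take $c_1=\tfrac12\alpha\tau\log\Delta$ and $c_2=2\alpha\tau\log\Delta$ once $t$ exceeds some $t_0(\Delta)$.

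I do not expect a genuine obstacle: the only points requiring care are using a summation index distinct from the endpoint $t$, and getting the off-by-one index shifts in the integral comparison right so that both bounds manifestly converge to the same constant. The conceptual content --- which is what singles out the $1/t$-type decay in Eq.~\eqref{eq:EA} as the ``correct'' one --- is that the geometric growth of the window width ($t\Delta - t = (\Delta-1)t$, growing like $t$) exactly cancels the $\sim 1/t$ size of each expected weight, leaving a scale-free $\log\Delta$ that is independent of $t$.
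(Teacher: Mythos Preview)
Your proposal is correct and follows essentially the same route as the paper's proof: apply linearity of expectation, substitute the hypothesis to obtain a shifted harmonic sum, bound it above and below by integrals of $1/(s+\tau)$, and observe that both bounds converge to $\log\Delta$, yielding $\E{Z_t^{t\Delta}}\to\alpha\tau\log\Delta=\Theta(1)$. The only cosmetic differences are that the paper reindexes the sum via $k=s+\tau$ before invoking the integral bounds and uses the alternative upper bound $\int_{a-1}^{b}\frac{dx}{x}$ rather than your $f(a)+\int_a^b f$; your version has the minor advantage of working transparently for non-integer $\tau$.
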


\begin{restatable}{lemma}{unnormalizedsparsitylemma}\label{lemma:unnormalized_sparsity}
Consider a set of random variables, $\{L_t\}$, representing attention logits. Let $\tau>0$ be a lengthscale parameter, and $\beta>0$ a multiplicative constant.
If the attention logits satisfy,
\begin{align}
  \label{eq:EAlogA}
  \E{\tilde{A}_t \log \tilde{A}_t} = \frac{\beta}{t/\tau + 1},
\end{align}
then we have scale-invariant unnormalised attention sparsity (Def.~\ref{def:scale_inv_unnorm_attn_sparsity}).
\end{restatable}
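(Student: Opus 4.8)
The plan is to reduce the claim to a single deterministic sum via linearity of expectation, and then control that sum by comparison with an integral—exactly the same strategy that establishes Lemma~\ref{lemma:total_attn}. First, by the definition of the unnormalised negentropy (Eq.~\ref{eq:def:N}) and linearity of expectation, I would write
\begin{align*}
  \E{\tilde{N}_{t}^{t\Delta}} = \sum_{s=t}^{t\Delta-1} \E{\tilde{A}_s \log \tilde{A}_s} = \sum_{s=t}^{t\Delta-1} \frac{\beta}{s/\tau+1},
\end{align*}
where the final equality substitutes the hypothesis Eq.~\eqref{eq:EAlogA}. Note that the summand here has precisely the same functional form as $\E{\tilde{A}_t}$ in Lemma~\ref{lemma:total_attn} (with $\alpha$ replaced by $\beta$), so the entire remaining argument is identical, just with a different leading constant.

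Next I would bound this harmonic-type sum using the fact that $f(s) = \beta/(s/\tau+1)$ is positive and strictly decreasing in $s$. For such a function, the sum is sandwiched between integrals,
\begin{align*}
  \int_t^{t\Delta} f(s)\,ds \;\leq\; \sum_{s=t}^{t\Delta-1} f(s) \;\leq\; \int_{t-1}^{t\Delta-1} f(s)\,ds.
\end{align*}
Each integral is elementary, since $\int f(s)\,ds = \beta\tau \log(s/\tau+1)$. In particular the lower bound evaluates to $\beta\tau \log\frac{t\Delta/\tau+1}{t/\tau+1}$, and because $\frac{t\Delta/\tau+1}{t/\tau+1}\to\Delta$ as $t\to\infty$, it converges to $\beta\tau\log\Delta$; the upper bound converges to the same limit by the same computation.

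Finally, since we take $\Delta>1$, the limiting value $\beta\tau\log\Delta$ is a strictly positive constant, so for all sufficiently large $t$ both the upper and lower bounds lie in a fixed interval $[c_1,c_2]$ with $0<c_1\leq c_2<\infty$. This is exactly the $\Theta(1)$ condition of Def.~\ref{def:scale_inv_unnorm_attn_sparsity}, completing the argument. I expect the only real care to be needed in the integral–sum sandwiching step: choosing the comparison integrals correctly for a decreasing summand so that both bounds provably converge to the same positive limit. Everything else is routine and, as noted, mirrors the proof of Lemma~\ref{lemma:total_attn} line for line.
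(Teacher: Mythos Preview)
Your proposal is correct and follows essentially the same approach as the paper: linearity of expectation reduces $\E{\tilde{N}_t^{t\Delta}}$ to a harmonic-type sum, which is then sandwiched by integrals that both converge to $\beta\tau\log\Delta$, yielding $\Theta(1)$. The only cosmetic difference is that the paper first rewrites the summand as $\beta\tau/(t+\tau)$ and invokes its Appendix~\ref{app:harmonic_bounds} bounds, whereas you bound $\beta/(s/\tau+1)$ directly; the arguments are otherwise line-for-line the same.
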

To construct an attention mechanism that satisfies Eq.~\eqref{eq:EA} and Eq.~\eqref{eq:EAlogA}, we consider a simplified setting in which the logits are marginally Gaussian and arise from taking Gaussian ``base logits'', $\bar{L}_t \sim \N{0, 1}$, and transforming them by multiplying by $a_t$ and adding a bias, $m_t$,
\begin{align}\label{eq:transformed_gaussian_logits}
  L_t &= a_t \bar{L}_t + m_t\sim \mathcal{N}(m_t, a_t^2).
\end{align}
Our goal is to find $m_t$ and $a^2_t$ such that scale-invariant total attention and scale-invariant unnormalised attention sparsity hold.
In particular that requires,
\begin{subequations}\label{eq:scale_inv_attn_eqns}
\begin{align}
  \label{eq:gaussian_total}
  \frac{\alpha}{\tfrac{t}{\tau}+1} &= \E{\tilde{A}_t} = e^{m_t + a_t^2/2},\\
  \label{eq:gaussian_sparse}
  \frac{\beta}{\tfrac{t}{\tau}+1} &= \E{\tilde{A}_t \log \tilde{A}_t} = (m_t + a_t^2) e^{m_t + a_t^2/2},
\end{align}
\end{subequations}
where $\tilde{A}_t = \exp\b{L_t}$.
Solving for $m_t$ and $a_t$, we have (see Appendix~\ref{app:solve_mean_var} for details),
\begin{subequations}
\label{eq:solution}
\begin{align}
\label{eq:gaussian_at}
a_t &= \sqrt{2\left[\log(t/\tau + 1) - \log \alpha + \beta / \alpha\right]},\\
\label{eq:gaussian_mt}
m_t &= -a^2_t + \beta/\alpha.
\end{align}
\end{subequations}
For this solution to be valid, we only require $\beta \geq \alpha \log \alpha$ since $\log (t/\tau + 1)\geq 0$ when $t \geq 0$.
We formally summarise the above results in Theorem~\ref{theorem:scale_inv} (proof in Appendix~\ref{app:proof_theorem1}), which tells us that this approach does indeed give scale-invariant total attention and scale-invariant unnormalised attention sparsity.
\begin{restatable}{theorem}{thmscaleinv}\label{theorem:scale_inv}
Suppose attention logits $\{L_t\}$ are marginally Gaussian with mean $m_t$ and standard deviation $a_t$ defined by Eq.~\eqref{eq:solution}.
Assuming $\alpha,\beta,\tau > 0$, $\beta\geq \alpha\log\alpha$, then we have scale-invariant total attention and scale-invariant unnormalised  attention sparsity.
\end{restatable}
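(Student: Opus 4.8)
The plan is to reduce the theorem to the hypotheses of Lemmas~\ref{lemma:total_attn} and~\ref{lemma:unnormalized_sparsity}: it suffices to show that the prescribed $m_t,a_t$ yield $\E{\tilde{A}_t} = \alpha/(t/\tau+1)$ and $\E{\tilde{A}_t \log \tilde{A}_t} = \beta/(t/\tau+1)$, after which the two lemmas deliver scale-invariant total attention and scale-invariant unnormalised attention sparsity directly. Since $\tilde{A}_t = \exp(L_t)$ with $L_t$ Gaussian, both targets are moments of a log-normal variable, so the argument is essentially a verification that the closed form in Eq.~\eqref{eq:solution} makes these two moments match. I would note up front that the definitions expand $\E{Z_t^{t\Delta}}$ and $\E{\tilde{N}_t^{t\Delta}}$ into sums by linearity of expectation, so only the marginal law of each $L_t$ is needed; this is why the ``marginally Gaussian'' hypothesis suffices and no joint/independence assumption enters.

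First I would compute the two moments in terms of $m_t,a_t$. Writing the Gaussian moment generating function $M(s) = \E{e^{s L_t}} = e^{m_t s + a_t^2 s^2/2}$, the first moment is just $M(1)$, giving $\E{\tilde{A}_t} = e^{m_t + a_t^2/2}$. For the second I would use $\E{\tilde{A}_t \log \tilde{A}_t} = \E{L_t e^{L_t}} = M'(1)$, and since $M'(s) = (m_t + a_t^2 s)M(s)$ this yields $(m_t + a_t^2)e^{m_t + a_t^2/2}$; these reproduce Eqs.~\eqref{eq:gaussian_total} and~\eqref{eq:gaussian_sparse}. Substituting the solution then collapses everything: $m_t = -a_t^2 + \beta/\alpha$ gives $m_t + a_t^2/2 = -a_t^2/2 + \beta/\alpha$, and inserting $a_t^2 = 2[\log(t/\tau+1) - \log\alpha + \beta/\alpha]$ reduces this to $\log\alpha - \log(t/\tau+1)$, so $\E{\tilde{A}_t} = \alpha/(t/\tau+1)$ exactly. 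Identically, $m_t + a_t^2 = \beta/\alpha$, whence $\E{\tilde{A}_t \log \tilde{A}_t} = (\beta/\alpha)\cdot \alpha/(t/\tau+1) = \beta/(t/\tau+1)$. Both lemma hypotheses hold, which concludes the argument.

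Finally I would verify well-posedness of the construction: $a_t$ is real precisely when $a_t^2 \geq 0$, and since $\log(t/\tau+1) \geq 0$ for $t \geq 0$ this is guaranteed as soon as $\beta/\alpha - \log\alpha \geq 0$, i.e.\ the assumed $\beta \geq \alpha\log\alpha$. The main (indeed essentially only) non-routine step is the identity $\E{L_t e^{L_t}} = (m_t + a_t^2)e^{m_t + a_t^2/2}$; once this is obtained by differentiating the MGF, the remainder is algebraic simplification and a direct appeal to the two lemmas, so I do not anticipate a genuine obstacle beyond bookkeeping.
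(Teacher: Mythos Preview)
Your proposal is correct and follows essentially the same route as the paper's own proof: reduce to Lemmas~\ref{lemma:total_attn} and~\ref{lemma:unnormalized_sparsity} by verifying $\E{\tilde{A}_t} = \alpha/(t/\tau+1)$ and $\E{\tilde{A}_t\log\tilde{A}_t} = \beta/(t/\tau+1)$ via log-normal moment formulas, then check $\beta\ge\alpha\log\alpha$ ensures $a_t^2\ge 0$. The only cosmetic difference is that you obtain $\E{L_t e^{L_t}} = (m_t+a_t^2)e^{m_t+a_t^2/2}$ by differentiating the MGF, whereas the paper derives it by completing the square (Appendix~\ref{app:integral}); both yield the same identity and the remainder of the argument is identical.
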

We therefore propose to scale the logits in real attention using $a_t$ and $m_t$ defined in Eq.~\eqref{eq:solution}.
As $t$ increases, the variance, $a_t^2$, \textit{increases} as the logarithm of $t$, while the mean \textit{decreases} as the logarithm of $t$.

Finally, note that while we have proven that we have scale-invariant unnormalised attention with this choice of $a_t$ and $m_t$, we have not proven that we have strong or weak scale-invariant attention.
We therefore checked empirically whether using IID Gaussian logits, scaled by $a_t$ and $m_t$, gave weak or strong scale-invariant attention entropy (Fig.~\ref{fig:sublog}).
We find that $H_{t}^{t \Delta}$ appears to scale with $\sqrt{\log(t)}$, and hence seems to satisfy weak, but not strong scale-invariant attention sparsity.

\textbf{Hyperparameters. } Introducing $a_t$ and $m_t$ (Eq.~\ref{eq:solution}), appears to have introduced three additional hyperparameters to tune. We reduce this to one additional hyperparameter, the lengthscale $\tau$, by specifying a boundary condition --- $a_{0}^2 = 1$ and $m_{0} = 0$. This boundary condition corresponds to not changing the scale of the local tokens.
Substituting into Eq.~\eqref{eq:gaussian_mt} gives $0 = -1 +\beta/\alpha$. Similarly Eq.~\eqref{eq:gaussian_at} requires that $1 = -2 \log \alpha + 2\beta/\alpha$. Putting these together, we obtain $\alpha = \beta = \mathrm{e}^{0.5}$.

That leaves us with the lengthscale as the only hyperparameter.
Note that when $t$ is small relative to $\tau$, neither $a_t^2$ nor $m_t$ change much. Therefore, the timescale sets the size of a local region in which attention is approximately unscaled.
Intuitively, it therefore makes sense to choose $\tau$ somewhere in the region of 1--100 tokens. In Appendix~\ref{app:setting_lengthscale} we tried $\tau\in\{10^{-2}, 10^{-1}, 10^0, 10^1, 10^2\}$ and find that  $\tau=10$ performs best in practice.
\begin{figure}
    \centering
    \begin{subfigure}[h]{\linewidth}
        \includegraphics[width=\linewidth]{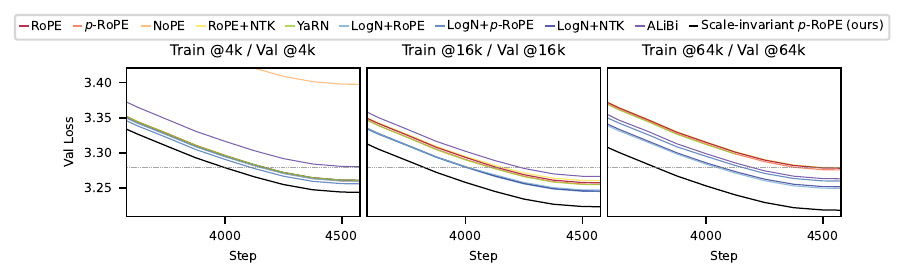}
        \vspace{-0.5cm}
        \caption{Scale-invariant attention improves language modelling over a range of training context lengths.}
        \label{fig:pretrain_various_lengths}
    \end{subfigure}
    \vspace{1ex}
    \begin{subfigure}[h]{\linewidth}
        \includegraphics[width=\linewidth]{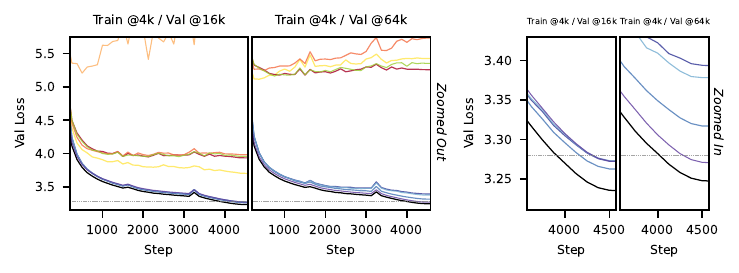}
        \vspace{-0.5cm}
        \caption{Scale-invariant attention enables zero-shot long-context generalization of at least 16x.}
        \label{fig:pretrain_4k}
    \end{subfigure}
    \caption{Validation losses throughout training of a 162M parameter GPT-2-like model with different attention mechanisms (our scale-invariant scheme shown in black). NoPE is omitted in all but top-left and bottom-left panels to avoid excessive zooming out (due to high loss). The gray dashed line shows the baseline of 3.28. The validation loss for many methods increases in unison at steps $\sim$1500 and $\sim$3250 (see (b), left) despite aggregating over seeds; this is due to a fixed training and validation data ordering.}
    \label{fig:pretrain_combined}
    \vspace{-0.35cm}
\end{figure}
\section{Experiments}\label{sec:experiments}
In this section, we compare scale-invariant attention with other dense attention methods including Dynamic NTK interpolation (RoPE+NTK)~\citep{bloc97_2023}, LogN scaling/SSMax~\citep{nakanishi2025scalable}, $p$-RoPE~\citep{barbero2024round}, and ALiBi~\citep{press2021train}. Our results show that our method, scale-invariant $p$-RoPE, has uniformly lower validation loss at a variety of training lengths (4k, 16k, 64k). Additionally, scale-invariant $p$-RoPE demonstrates stronger zero-shot long-context generalization (e.g. in `Train @4k/Val @16k' and `Train @4k/Val @64k' settings) versus all other methods. Finally, scale-invariant~\prope, along with LogN, saturated``needle-in-a-haystack'' task.

 We pretrained GPT-2-style models~\citep{radford2019language} (with QK-norm, ReLU$^2$ activations, etc.~\citep{modded_nanogpt_2024}) from scratch on the FineWeb dataset~\citep{penedo2024fineweb}, using a fixed training data ordering and a 10M token validation set. We trained linear layers with Muon~\citep{jordan2024muon}, and remaining parameters with Adam. We implemented scale-invariant attention using FlexAttention~\citep{dong2024flex}.

We trained two models: one with 162M parameters and another with 304M. The 162M one were trained for 4578 steps on 2.4B tokens over a range of context lengths (4k, 16k, 64k). The 304M parameter models were trained only on the best length-generalising methods, for 10.9k steps on 10B tokens, at 4k context length. We give further experimental details in Appendix~\ref{app:further_exp}. For the 162M parameter model, we targeted a validation loss of 3.28, following Karpathy’s GPT-2 reproduction~\citep{karpathy_llmc_481,modded_nanogpt_2024}, shown in the figures as a horizontal grey dashed line.

\textbf{Long-context performance. } We examined in-distribution, long-context performance of the different attention methods by looking at the validation loss for the same context length used for training. Even in this in-distribution setting, scale-invariant~\prope~shows strong improvements in validation loss at all training lengths, 4k, 16k, and 64k (Fig.~\ref{fig:pretrain_various_lengths}).

\textbf{Length Generalization. } We evaluate length generalization by measuring validation loss on long sequences (16k and 64k) when training on 4k tokens. The `Train @4k/Val @64k' setting in particular represents a considerable jump of 16× between train and validation. Table~\ref{tab:final_losses} and Fig.~\ref{fig:pretrain_4k} report results for the 162M model. In the left panel of the Figure, ALiBi, LogN, and our method substantially outperform other approaches in generalizing to longer sequences. The right panel zooms in and shows that scale-invariant~\prope~achieves the strongest generalization overall.

In preliminary experiments we tried other scale-invariant methods. We found that the most obvious method, scale-invariant RoPE, did not generalise well to long contexts (see Appendix~\ref{app:scale_invariant_comparisons}). The $p$-RoPE method is similar to RoPE but excludes low-frequency/high-wavelength components in the position embedding, possibly suggesting that low-frequency components in RoPE interfere with the scale-invariant transformation $L_t \mapsto a_t L_t + m_t$.
LogN also demonstrates stronger performance when paired with $p$-RoPE  rather than RoPE.
We were surprised that RoPE+NTK struggled to generalise in the `Train @4k / Val @64k' setting, but we believe this can be explained by the training context size: `Train @16k / Val @64k' is much better for RoPE+NTK (see Fig.~\ref{fig:pretrain_16k} in the Appendix).

Fig.~\ref{fig:med_model_4k} presents pretraining losses for ALiBi, LogN+$p$-RoPE, and our method on a larger 304M model --- selected due to their performance on the 162M `Train @4k/Val @64k' task. Scale-invariant $p$-RoPE maintains its advantage at this larger scale.

\textbf{Needle in a Haystack. } The key benefit of scale-invariant attention is that it balances local and sparse global attention.
As such, we might worry that long-context retrieval performance might suffer versus other approaches that do not have specific mechanisms to ensure that attention to the local context does not vanish.

To assess whether long-context information retrieval capabilities suffered, we fine-tuned models on a needle-in-a-haystack task (note that fine-tuning on this task is unusual, but we found that prompting alone was not sufficient to perform this task, as bigger models/more pretraining would be required).
Needle-in-a-haystack~\citep{kamradt2023needle} measures a model's ability to precisely retrieve specific details (needles) from a large body of text (the haystack).
Our tasks constructs prompts by concatenating text samples from the C4 dataset~\citep{roberts2019exploring} and embedding ``needles'' of the form `\texttt{The special magic <city> number is <7\_digit\_number>}'. We insert three (rather than one) needles uniformly, at random, into each context for more signal per example, with each needle contributing separately to the overall accuracy.
A successful retrieval requires the model to output both the city and the associated number correctly. We trained models on sequences of length 4k, and tested at 4k, 16k, and 64k.

Table~\ref{tab:nih_val_acc} show that scale-invariant $p$-RoPE and LogN+\prope~perform well, while almost all other methods fail almost completely at 64k context length.
Thus, despite focusing more on local context, our method does not seem to have suffered in retrieval performance.
\begin{table}[t]
\small
\centering
\caption{Final mean validation losses ($\pm$1 standard error across 3 seeds) for different methods when training with 4k context length on a 162M parameter GPT-2-style model. The error bars are small due to consistent training data ordering, and a fixed validation set.}
\label{tab:final_losses}
\begin{tabular}{lccc}
\toprule
Method & Val @ 4k & Val @ 16k & Val @ 64k \\
\midrule
RoPE & 3.261 $\pm$ 0.001 & 3.936 $\pm$ 0.010 & 5.260 $\pm$ 0.014 \\
$p$-RoPE & 3.260 $\pm$ 0.001 & 3.984 $\pm$ 0.008 & 5.735 $\pm$ 0.085 \\
NoPE & 3.397 $\pm$ 0.000 & 6.430 $\pm$ 0.059 & 8.125 $\pm$ 0.062 \\
RoPE+NTK & 3.261 $\pm$ 0.001 & 3.703 $\pm$ 0.007 & 5.430 $\pm$ 0.026 \\
YaRN & 3.261 $\pm$ 0.000 & 3.958 $\pm$ 0.018 & 5.353 $\pm$ 0.065 \\
LogN+RoPE & 3.260 $\pm$ 0.001 & 3.273 $\pm$ 0.004 & 3.378 $\pm$ 0.011 \\
LogN+$p$-RoPE & 3.256 $\pm$ 0.001 & 3.262 $\pm$ 0.002 & 3.317 $\pm$ 0.005 \\
LogN+NTK & 3.261 $\pm$ 0.001 & 3.272 $\pm$ 0.002 & 3.394 $\pm$ 0.025 \\
ALiBi & 3.281 $\pm$ 0.001 & 3.272 $\pm$ 0.001 & 3.270 $\pm$ 0.000 \\
Scale-invariant $p$-RoPE (ours) & \textbf{3.244} $\pm$ 0.001 & \textbf{3.235} $\pm$ 0.001 & \textbf{3.247} $\pm$ 0.001 \\
\bottomrule
\end{tabular}
\end{table}

\begin{figure}[t]
    \centering
    \includegraphics[width=\linewidth]{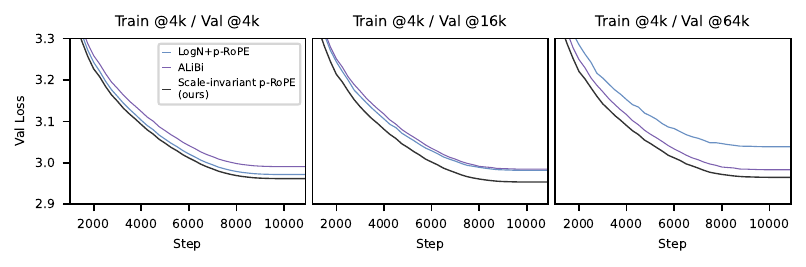}\\[-1.8ex]
    \caption{Validation losses throughout training for a 304M parameter model.}
    \label{fig:med_model_4k}
\end{figure}

\begin{table}[!t]
\centering
\caption{Mean validation accuracies on the needle-in-a-haystack task, $\pm$1 standard error. Metrics were calculated over 3 seeds, after 300 steps of fine-tuning.}
\label{tab:nih_val_acc}
\small
\begin{tabular}{lccc}
\toprule
Method & Val Acc @4k & Val Acc @16k & Val Acc @64k \\
\midrule
RoPE & 0.962 $\pm$ 0.003 & 0.000 $\pm$ 0.000 & 0.000 $\pm$ 0.000 \\
$p$-RoPE & 0.966 $\pm$ 0.001 & 0.250 $\pm$ 0.020 & 0.000 $\pm$ 0.000 \\
NoPE & 0.964 $\pm$ 0.000 & 0.303 $\pm$ 0.239 & 0.000 $\pm$ 0.000 \\
RoPE+NTK & 0.962 $\pm$ 0.001 & 0.217 $\pm$ 0.078 & 0.000 $\pm$ 0.000 \\
YaRN & \textbf{0.969} $\pm$ 0.001 & 0.000 $\pm$ 0.000 & 0.000 $\pm$ 0.000 \\
LogN+RoPE & 0.965 $\pm$ 0.002 & 0.276 $\pm$ 0.010 & 0.064 $\pm$ 0.006 \\
LogN+$p$-RoPE & \textbf{0.969} $\pm$ 0.002 & 0.962 $\pm$ 0.003 & 0.939 $\pm$ 0.009 \\
LogN+NTK & 0.962 $\pm$ 0.001 & 0.253 $\pm$ 0.015 & 0.056 $\pm$ 0.008 \\
ALiBi & 0.957 $\pm$ 0.002 & 0.020 $\pm$ 0.001 & 0.003 $\pm$ 0.000 \\
Scale-invariant $p$-RoPE (ours) & 0.965 $\pm$ 0.000 & \textbf{0.969} $\pm$ 0.004 & \textbf{0.969} $\pm$ 0.005 \\
\bottomrule
\end{tabular}
\end{table}
\section{Limitations}\label{sec:limitations}
In this work, we evaluated our methods by pretraining with 162M and 304M parameter models, and investigated 7B parameter models via continual pretraining in Appendix~\ref{app:larger_models}. Ideally, we would have pretrained from scratch at the multi-billion parameter scale, in line with contemporary state-of-the-art LLMs. While compute resource constraints prevented this, our results, together with the natural theoretical approach, offer no indication that our conclusions would fail to generalise to larger, commercially deployed models.

We focused on scale-invariant $p$-RoPE with dense attention, but the extension to other settings is a promising direction for future investigation.
\section{Conclusions}
We have proposed two desirable properties of attention mechanisms, ``scale-invariant total attention'' and ``scale-invariant attention sparsity'', and presented a straightforward modification to attention logits that enables these properties in practice. In our experiments, we found that our scale-invariant attention modification, especially when paired with $p$-RoPE, substantially improves long-context language modelling performance and gives zero-shot generalisation from training on short contexts to testing on long contexts.
\FloatBarrier
\section{Acknowledgements}
We sincerely thank the Engineering and Physical Sciences Research Council and the COMPASS CDT for funding Ben Anson at the University of Bristol. Many of our experiments used computational resources at the Advanced Computing Research Centre at the University of Bristol, with GPUs generously funded by Dr. Stewart.
\bibliography{refs}

@article{van1996modelling,
  title={Modelling the power spectra of natural images: statistics and information},
  author={Van der Schaaf, van A and van Hateren, JH van},
  journal={Vision research},
  volume={36},
  number={17},
  pages={2759--2770},
  year={1996},
  publisher={Elsevier}
}

@article{dong2024flex,
  title={Flex Attention: A Programming Model for Generating Optimized Attention Kernels},
  author={Dong, Juechu and Feng, Boyuan and Guessous, Driss and Liang, Yanbo and He, Horace},
  journal={arXiv preprint arXiv:2412.05496},
  year={2024}
}

@article{barbero2024transformers,
  title={Transformers need glasses! information over-squashing in language tasks},
  author={Barbero, Federico and Banino, Andrea and Kapturowski, Steven and Kumaran, Dharshan and Madeira Ara{\'u}jo, Jo{\~a}o and Vitvitskyi, Oleksandr and Pascanu, Razvan and Veli{\v{c}}kovi{\'c}, Petar},
  journal={Advances in Neural Information Processing Systems},
  volume={37},
  pages={98111--98142},
  year={2024}
}

@article{press2021train,
  title={Train short, test long: Attention with linear biases enables input length extrapolation},
  author={Press, Ofir and Smith, Noah A and Lewis, Mike},
  journal={arXiv preprint arXiv:2108.12409},
  year={2021}
}

@article{su2024roformer,
  title={Roformer: Enhanced transformer with rotary position embedding},
  author={Su, Jianlin and Ahmed, Murtadha and Lu, Yu and Pan, Shengfeng and Bo, Wen and Liu, Yunfeng},
  journal={Neurocomputing},
  volume={568},
  pages={127063},
  year={2024},
  publisher={Elsevier}
}

@article{chen2023extending,
  title={Extending context window of large language models via positional interpolation},
  author={Chen, Shouyuan and Wong, Sherman and Chen, Liangjian and Tian, Yuandong},
  journal={arXiv preprint arXiv:2306.15595},
  year={2023}
}

@article{peng2023yarn,
  title={Yarn: Efficient context window extension of large language models},
  author={Peng, Bowen and Quesnelle, Jeffrey and Fan, Honglu and Shippole, Enrico},
  journal={arXiv preprint arXiv:2309.00071},
  year={2023}
}

@misc{bloc97_2023,
  author = {bloc97},
  title = {NTK-Aware Scaled RoPE allows LLaMA models to have extended (8k+) context size without any fine-tuning and minimal perplexity degradation},
  year = {2023},
  howpublished = {Reddit},
  url = {https://www.reddit.com/r/LocalLLaMA/comments/14lz7j5/ntkaware_scaled_rope_allows_llama_models_to_have/}
}

@article{nakanishi2025scalable,
  title={Scalable-Softmax Is Superior for Attention},
  author={Nakanishi, Ken M},
  journal={arXiv preprint arXiv:2501.19399},
  year={2025}
}

@article{chiang2022overcoming,
  title={Overcoming a theoretical limitation of self-attention},
  author={Chiang, David and Cholak, Peter},
  journal={arXiv preprint arXiv:2202.12172},
  year={2022}
}

@article{li2025information,
  title={Information Entropy Invariance: Enhancing Length Extrapolation in Attention Mechanisms},
  author={Li, Kewei and Kong, Yanwen and Xu, Yiping and Su, Jianlin and Huang, Lan and Zhang, Ruochi and Zhou, Fengfeng},
  journal={arXiv preprint arXiv:2501.08570},
  year={2025}
}

@article{xiong2023effective,
  title={Effective long-context scaling of foundation models},
  author={Xiong, Wenhan and Liu, Jingyu and Molybog, Igor and Zhang, Hejia and Bhargava, Prajjwal and Hou, Rui and Martin, Louis and Rungta, Rashi and Sankararaman, Karthik Abinav and Oguz, Barlas and others},
  journal={arXiv preprint arXiv:2309.16039},
  year={2023}
}

@article{grattafiori2024llama,
  title={The llama 3 herd of models},
  author={Grattafiori, Aaron and Dubey, Abhimanyu and Jauhri, Abhinav and Pandey, Abhinav and Kadian, Abhishek and Al-Dahle, Ahmad and Letman, Aiesha and Mathur, Akhil and Schelten, Alan and Vaughan, Alex and others},
  journal={arXiv preprint arXiv:2407.21783},
  year={2024}
}

@article{han2023lm,
  title={Lm-infinite: Zero-shot extreme length generalization for large language models},
  author={Han, Chi and Wang, Qifan and Peng, Hao and Xiong, Wenhan and Chen, Yu and Ji, Heng and Wang, Sinong},
  journal={arXiv preprint arXiv:2308.16137},
  year={2023}
}

@article{fountas2024human,
  title={Human-like episodic memory for infinite context llms},
  author={Fountas, Zafeirios and Benfeghoul, Martin A and Oomerjee, Adnan and Christopoulou, Fenia and Lampouras, Gerasimos and Bou-Ammar, Haitham and Wang, Jun},
  journal={arXiv preprint arXiv:2407.09450},
  year={2024}
}

@article{xiao2024infllm,
  title={Infllm: Unveiling the intrinsic capacity of llms for understanding extremely long sequences with training-free memory},
  author={Xiao, Chaojun and Zhang, Pengle and Han, Xu and Xiao, Guangxuan and Lin, Yankai and Zhang, Zhengyan and Liu, Zhiyuan and Han, Song and Sun, Maosong},
  journal={arXiv e-prints},
  pages={arXiv--2402},
  year={2024}
}

@article{zhu2023pose,
  title={Pose: Efficient context window extension of llms via positional skip-wise training},
  author={Zhu, Dawei and Yang, Nan and Wang, Liang and Song, Yifan and Wu, Wenhao and Wei, Furu and Li, Sujian},
  journal={arXiv preprint arXiv:2309.10400},
  year={2023}
}

@article{wang2024resonance,
  title={Resonance rope: Improving context length generalization of large language models},
  author={Wang, Suyuchen and Kobyzev, Ivan and Lu, Peng and Rezagholizadeh, Mehdi and Liu, Bang},
  journal={arXiv preprint arXiv:2403.00071},
  year={2024}
}

@article{bai2024longalign,
  title={Longalign: A recipe for long context alignment of large language models},
  author={Bai, Yushi and Lv, Xin and Zhang, Jiajie and He, Yuze and Qi, Ji and Hou, Lei and Tang, Jie and Dong, Yuxiao and Li, Juanzi},
  journal={arXiv preprint arXiv:2401.18058},
  year={2024}
}

@article{ye2024mplug,
  title={mplug-owl3: Towards long image-sequence understanding in multi-modal large language models},
  author={Ye, Jiabo and Xu, Haiyang and Liu, Haowei and Hu, Anwen and Yan, Ming and Qian, Qi and Zhang, Ji and Huang, Fei and Zhou, Jingren},
  journal={arXiv preprint arXiv:2408.04840},
  year={2024}
}

@article{jin2024llm,
  title={Llm maybe longlm: Self-extend llm context window without tuning},
  author={Jin, Hongye and Han, Xiaotian and Yang, Jingfeng and Jiang, Zhimeng and Liu, Zirui and Chang, Chia-Yuan and Chen, Huiyuan and Hu, Xia},
  journal={arXiv preprint arXiv:2401.01325},
  year={2024}
}

@article{beltagy2020longformer,
  title={Longformer: The long-document transformer},
  author={Beltagy, Iz and Peters, Matthew E and Cohan, Arman},
  journal={arXiv preprint arXiv:2004.05150},
  year={2020}
}

@article{ding2023longnet,
  title={Longnet: Scaling transformers to 1,000,000,000 tokens},
  author={Ding, Jiayu and Ma, Shuming and Dong, Li and Zhang, Xingxing and Huang, Shaohan and Wang, Wenhui and Zheng, Nanning and Wei, Furu},
  journal={arXiv preprint arXiv:2307.02486},
  year={2023}
}

@article{munkhdalai2024leave,
  title={Leave no context behind: Efficient infinite context transformers with infini-attention},
  author={Munkhdalai, Tsendsuren and Faruqui, Manaal and Gopal, Siddharth},
  journal={arXiv preprint arXiv:2404.07143},
  volume={101},
  year={2024}
}

@inproceedings{bulatov2024beyond,
  title={Beyond attention: breaking the limits of transformer context length with recurrent memory},
  author={Bulatov, Aydar and Kuratov, Yuri and Kapushev, Yermek and Burtsev, Mikhail},
  booktitle={Proceedings of the AAAI Conference on Artificial Intelligence},
  year={2024}
}

@article{liu2023ring,
  title={Ring attention with blockwise transformers for near-infinite context},
  author={Liu, Hao and Zaharia, Matei and Abbeel, Pieter},
  journal={arXiv preprint arXiv:2310.01889},
  year={2023}
}

@article{barbero2024round,
  title={Round and Round We Go! What makes Rotary Positional Encodings useful?},
  author={Barbero, Federico and Vitvitskyi, Alex and Perivolaropoulos, Christos and Pascanu, Razvan and Veli{\v{c}}kovi{\'c}, Petar},
  journal={arXiv preprint arXiv:2410.06205},
  year={2024}
}

@article{hu2024longrecipe,
  title={LongRecipe: Recipe for Efficient Long Context Generalization in Large Language Models},
  author={Hu, Zhiyuan and Liu, Yuliang and Zhao, Jinman and Wang, Suyuchen and Wang, Yan and Shen, Wei and Gu, Qing and Luu, Anh Tuan and Ng, See-Kiong and Jiang, Zhiwei and others},
  journal={arXiv preprint arXiv:2409.00509},
  year={2024}
}

@misc{modded_nanogpt_2024,
  author       = {Keller Jordan and Jeremy Bernstein and Brendan Rappazzo and
                  @fernbear.bsky.social and Boza Vlado and You Jiacheng and
                  Franz Cesista and Braden Koszarsky and @Grad62304977},
  title        = {modded-nanogpt: Speedrunning the NanoGPT baseline},
  year         = {2024},
  url          = {https://github.com/KellerJordan/modded-nanogpt}
}

@misc{karpathy_llmc_481,
  author = {Karpathy, Andrej},
  title = {Discussion \#481},
  year = {2024},
  howpublished = {\url{https://github.com/karpathy/llm.c/discussions/481}}
}

@article{penedo2024fineweb,
  title={The fineweb datasets: Decanting the web for the finest text data at scale},
  author={Penedo, Guilherme and Kydl{\'\i}{\v{c}}ek, Hynek and Lozhkov, Anton and Mitchell, Margaret and Raffel, Colin A and Von Werra, Leandro and Wolf, Thomas and others},
  journal={Advances in Neural Information Processing Systems},
  volume={37},
  pages={30811--30849},
  year={2024}
}

@article{roberts2019exploring,
  title={Exploring the limits of transfer learning with a unified text-to-text transformer},
  author={Roberts, Adam and Raffel, Colin and Lee, Katherine and Matena, Michael and Shazeer, Noam and Liu, Peter J and Narang, Sharan and Li, Wei and Zhou, Yanqi},
  journal={Google Research},
  year={2019}
}

@article{team2025gemma,
  title={Gemma 3 Technical Report},
  author={{Gemma Team}},
  journal={arXiv preprint arXiv:2503.19786},
  year={2025}
}

@misc{kexuefm-8823,
    title={Attention's Scale Operation from the Perspective of Entropy Invariance},
    author={Su Jianlin},
    year={2021},
    url={{https://www.kexue.fm/archives/8823}},
}

@article{bai2023qwen,
  title={Qwen technical report},
  author={Bai, Jinze and Bai, Shuai and Chu, Yunfei and Cui, Zeyu and Dang, Kai and Deng, Xiaodong and Fan, Yang and Ge, Wenbin and Han, Yu and Huang, Fei and others},
  journal={arXiv preprint arXiv:2309.16609},
  year={2023}
}

@article{liu2024reattention,
  title={ReAttention: Training-Free Infinite Context with Finite Attention Scope},
  author={Liu, Xiaoran and Li, Ruixiao and Guo, Qipeng and Liu, Zhigeng and Song, Yuerong and Lv, Kai and Yan, Hang and Li, Linlin and Liu, Qun and Qiu, Xipeng},
  journal={arXiv preprint arXiv:2407.15176},
  year={2024}
}

@article{martins2021infty,
  title={{$\infty$-former: Infinite Memory Transformer}},
  author={Martins, Pedro Henrique and Marinho, Zita and Martins, Andr{\'e} FT},
  journal={arXiv preprint arXiv:2109.00301},
  year={2021}
}

@article{chen2025edgeinfinite,
  title={EdgeInfinite: A Memory-Efficient Infinite-Context Transformer for Edge Devices},
  author={Chen, Jiyu and Peng, Shuang and Luo, Daxiong and Yang, Fan and Wu, Renshou and Li, Fangyuan and Chen, Xiaoxin},
  journal={arXiv preprint arXiv:2503.22196},
  year={2025}
}

@article{gao2024train,
  title={How to train long-context language models (effectively)},
  author={Gao, Tianyu and Wettig, Alexander and Yen, Howard and Chen, Danqi},
  journal={arXiv preprint arXiv:2410.02660},
  year={2024}
}

@article{lieber2024jamba,
  title={Jamba: A hybrid transformer-mamba language model},
  author={Lieber, Opher and Lenz, Barak and Bata, Hofit and Cohen, Gal and Osin, Jhonathan and Dalmedigos, Itay and Safahi, Erez and Meirom, Shaked and Belinkov, Yonatan and Shalev-Shwartz, Shai and others},
  journal={arXiv preprint arXiv:2403.19887},
  year={2024}
}

@article{yang2024qwen2,
  title={Qwen2. 5 technical report},
  author={Yang, An and Yang, Baosong and Zhang, Beichen and Hui, Binyuan and Zheng, Bo and Yu, Bowen and Li, Chengyuan and Liu, Dayiheng and Huang, Fei and Wei, Haoran and others},
  journal={arXiv preprint arXiv:2412.15115},
  year={2024}
}

@article{liu2024deepseek,
  title={Deepseek-v3 technical report},
  author={Liu, Aixin and Feng, Bei and Xue, Bing and Wang, Bingxuan and Wu, Bochao and Lu, Chengda and Zhao, Chenggang and Deng, Chengqi and Zhang, Chenyu and Ruan, Chong and others},
  journal={arXiv preprint arXiv:2412.19437},
  year={2024}
}

@article{cohere2025command,
  title={Command A: An Enterprise-Ready Large Language Model},
  author={{Cohere Team}},
  journal={arXiv preprint arXiv:2504.00698},
  year={2025}
}

@misc{llama4team2025,
  title={The Llama 4 herd: The beginning of a new era of natively multimodal AI innovation},
  author={{Llama 4 Team}},
  year={2025},
  howpublished={\url{https://ai.meta.com/blog/llama-4-multimodal-intelligence/}},
  note={Accessed: 2025-04-08}
}

@article{kazemnejad2023impact,
  title={The impact of positional encoding on length generalization in transformers},
  author={Kazemnejad, Amirhossein and Padhi, Inkit and Natesan Ramamurthy, Karthikeyan and Das, Payel and Reddy, Siva},
  journal={Advances in Neural Information Processing Systems},
  volume={36},
  pages={24892--24928},
  year={2023}
}

@article{liu2023scaling,
  title={Scaling laws of rope-based extrapolation},
  author={Liu, Xiaoran and Yan, Hang and Zhang, Shuo and An, Chenxin and Qiu, Xipeng and Lin, Dahua},
  journal={arXiv preprint arXiv:2310.05209},
  year={2023}
}

@misc{kamradt2023needle,
  author       = {Kamradt, G.},
  title        = {Needle in a Haystack - Pressure Testing LLMs},
  year         = {2023},
  howpublished = {\url{https://github.com/gkamradt/LLMTest_NeedleInAHaystack}},
}

@article{gu2024attention,
  title={When Attention Sink Emerges in Language Models: An Empirical View},
  author={Gu, Xiangming and Pang, Tianyu and Du, Chao and Liu, Qian and Zhang, Fengzhuo and Du, Cunxiao and Wang, Ye and Lin, Min},
  journal={arXiv preprint arXiv:2410.10781},
  year={2024}
}

@misc{jordan2024muon,
  author       = {Keller Jordan and Yuchen Jin and Vlado Boza and You Jiacheng and
                  Franz Cesista and Laker Newhouse and Jeremy Bernstein},
  title        = {Muon: An optimizer for hidden layers in neural networks},
  year         = {2024},
  url          = {https://kellerjordan.github.io/posts/muon/}
}

@article{wortsman2023small,
  title={Small-scale proxies for large-scale transformer training instabilities},
  author={Wortsman, Mitchell and Liu, Peter J and Xiao, Lechao and Everett, Katie and Alemi, Alex and Adlam, Ben and Co-Reyes, John D and Gur, Izzeddin and Kumar, Abhishek and Novak, Roman and others},
  journal={arXiv preprint arXiv:2309.14322},
  year={2023}
}

@inproceedings{wolf2020transformers,
  title={Transformers: State-of-the-art natural language processing},
  author={Wolf, Thomas and Debut, Lysandre and Sanh, Victor and Chaumond, Julien and Delangue, Clement and Moi, Anthony and Cistac, Pierric and Rault, Tim and Louf, R{\'e}mi and Funtowicz, Morgan and others},
  booktitle={Proceedings of the 2020 conference on empirical methods in natural language processing: system demonstrations},
  pages={38--45},
  year={2020}
}

@article{radford2019language,
  title={Language models are unsupervised multitask learners},
  author={Radford, Alec and Wu, Jeffrey and Child, Rewon and Luan, David and Amodei, Dario and Sutskever, Ilya and others},
  journal={OpenAI blog},
  volume={1},
  number={8},
  pages={9},
  year={2019}
}

@article{touvron2023llama,
  title={Llama 2: Open foundation and fine-tuned chat models},
  author={Touvron, Hugo and Martin, Louis and Stone, Kevin and Albert, Peter and Almahairi, Amjad and Babaei, Yasmine and Bashlykov, Nikolay and Batra, Soumya and Bhargava, Prajjwal and Bhosale, Shruti and others},
  journal={arXiv preprint arXiv:2307.09288},
  year={2023}
}

@software{torchtune,
  title        = {Torchtune: PyTorch's Finetuning Library},
  author       = {{Torchtune}},
  year         = {2024},
  month        = apr,
  url          = {https://github.com/pytorch/torchtune},
  license      = {BSD-3-Clause},
}

@article{team2024gemma,
  title={Gemma 2: Improving open language models at a practical size},
  author={Team, Gemma and Riviere, Morgane and Pathak, Shreya and Sessa, Pier Giuseppe and Hardin, Cassidy and Bhupatiraju, Surya and Hussenot, L{\'e}onard and Mesnard, Thomas and Shahriari, Bobak and Ram{\'e}, Alexandre and others},
  journal={arXiv preprint arXiv:2408.00118},
  year={2024}
}
\bibliographystyle{icml2024}

\newpage
\appendix
\section{Relating the Indicies in the Paper to Standard Indices}
The standard form for attention is,
\begin{align}
  S^{ij} &= \frac{1}{\sqrt{d}} \sum_{\lambda=1}^d Q_{i\lambda} K_{j\lambda},
\end{align}
where $\lambda$ indexes the feature, $i$ indexes the query (i.e.\ the token we're generating now) and $j$ indexes the key (i.e.\ the token we're attending to).
We fix $i$, and take $t = i-j$, so
\begin{align}
  S^{ij} &= S_{t=i-j}
\end{align}
where $S_{t=i-j}$ is given in Eq.~\ref{eq:def:S} in the main text.
Then,
\begin{align}
  L^{ij} &= L(S^{ij}; i-j) = L(S_t; t) = L_{t=i-j},
\end{align}
where $L_{t=i-j}$ is given in Eq.~\ref{eq:def:S} in the main text.
Then the unnormalised attention weights, $\tilde{A}^{ij}$, normalised attention weights, $A^{ij}$ and normalisers, $Z^i$ are,
\begin{align}
  \tilde{A}^{ij} &= \exp\b{L^{ij}},\\
  \intertext{where, $\tilde{A}_{t=i-j}$ is given in Eq.~\ref{eq:def:AAZ} in the main text.}
  A^{ij} &= \frac{\tilde{A}^{ij}}{Z^i}, \\
  \intertext{where, $A_{t=i-j}$ is given in Eq.~\ref{eq:def:AAZ} in the main text.}
  Z^i &= \sum_{j=1}^i\tilde{A}^{ij} = Z,
\end{align}
where, $Z$ is given in Eq.~\ref{eq:def:AAZ} in the main text.
\section[Expected value of powers of X multiplied by exponential of X, for Gaussian X]{$\E{X^k \exp(\alpha X)}$ where $X$ is Gaussian}\label{app:integral}
Assume $X \sim \mathcal{N}(\mu, \sigma^2)$, then we can compute the expectation in terms of a moment of a Gaussian,
\begin{align}
\mathbb{E}[X^k e^{\alpha X}] &= \int_{-\infty}^{\infty} x^k e^{\alpha x} \frac{1}{\sqrt{2\pi\sigma^2}} e^{-\frac{(x-\mu)^2}{2\sigma^2}} dx \\
&= \int_{-\infty}^{\infty} x^k \frac{1}{\sqrt{2\pi\sigma^2}} e^{\alpha x - \frac{(x-\mu)^2}{2\sigma^2}} dx \\
&= \int_{-\infty}^{\infty} x^k \frac{1}{\sqrt{2\pi\sigma^2}} e^{-\frac{1}{2\sigma^2}(x^2 - 2\mu x + \mu^2 - 2\alpha\sigma^2 x)} dx \\
&= \int_{-\infty}^{\infty} x^k \frac{1}{\sqrt{2\pi\sigma^2}} e^{-\frac{1}{2\sigma^2}(x^2 - 2(\mu + \alpha\sigma^2)x + \mu^2)} dx \\
&= e^{\frac{\alpha^2\sigma^2}{2} + \alpha\mu} \int_{-\infty}^{\infty} x^k \frac{1}{\sqrt{2\pi\sigma^2}} e^{-\frac{(x-(\mu+\alpha\sigma^2))^2}{2\sigma^2}} dx\\
&= e^{\frac{\alpha^2\sigma^2}{2} + \alpha\mu} \E{\tilde{X}^k},
\intertext{where $\tilde{X}\sim \mathcal{N}(\mu', \sigma'^2)= \mathcal{N}(\mu + \alpha\sigma^2, \sigma^2)$. In the case $\alpha = k = 1$, we have,}
\E{X\exp(X)}&= (\mu + \sigma^2)\exp(\mu + \sigma^2/2).
\intertext{In the case $\alpha = k = 2$, we have,}
\E{X^2\exp(2X)} &= ((\mu + 2\sigma^2)^2 + \sigma^2)\exp(2\mu + 2\sigma^2).
\end{align}

\section[Deriving logit transformation]{Deriving $a_t$ and $m_t$ in the Logit Transformation}
\label{app:solve_mean_var}
From Eq.~\eqref{eq:scale_inv_attn_eqns}, we wish to find $a_t$ and $m_t$ such that,
\begin{align}
  \frac{\alpha}{\tfrac{t}{\tau}+1} &= e^{m_t + a_t^2/2} \label{eq:app:gaussian_total}\\
  \frac{\beta}{\tfrac{t}{\tau}+1} &= (m_t + a_t^2) e^{m_t + a_t^2/2}. \label{eq:app:gaussian_sparse}
\end{align}
We begin by dividing Eq.~\eqref{eq:app:gaussian_sparse} by Eq.~\eqref{eq:app:gaussian_total},
\begin{align}
  \frac{\beta}{\alpha} &= m_t + a_t^2, \label{eq:app:gaussian_ratio}\\
  \intertext{which can be rearranged to,}
  m_t &= \frac{\beta}{\alpha} - a_t^2,\label{eq:app:gaussian_ratio:m}\\
\text{and }  a_t^2 &= \frac{\beta}{\alpha} - m_t. \label{eq:app:gaussian_ratio:a2}
\end{align}

Now, taking the log of Eq.~\eqref{eq:app:gaussian_total}, we have,
\begin{align}
  m_t + \tfrac{1}{2} a_t^2 &= - \log \b{\tfrac{t}{\tau}+1} + \log \alpha. \label{eq:app:gaussian_total_log}
  \end{align}
Define $f_t = \log \b{\tfrac{t}{\tau}+1} - \log \alpha$. Then
to solve for $a_t^2$, we substitute $m_t$ from Eq.~\eqref{eq:app:gaussian_ratio:m} into Eq.~\eqref{eq:app:gaussian_total_log},
\begin{align}
  \b{\tfrac{\beta}{\alpha} - a_t^2} + \tfrac{1}{2} a_t^2 &= - f_t\\
  \tfrac{\beta}{\alpha} - \tfrac{1}{2} a_t^2 &= - f_t\\
  a_t^2 &= 2 \b{f_t + \tfrac{\beta}{\alpha}}. \label{eq:app:gaussian_a2t_simple}
\end{align}

To solve for $m_t$, we substitute $a_t^2$ from Eq.~\eqref{eq:app:gaussian_ratio:a2} into Eq.~\eqref{eq:app:gaussian_total_log}
\begin{align}
  m_t + \tfrac{1}{2} \b{\tfrac{\beta}{\alpha} - m_t} &= - f_t\\
  \tfrac{1}{2} m_t + \tfrac{\beta}{2 \alpha} &= - f_t\\
  m_t &= - 2 \b{f_t + \tfrac{\beta}{2 \alpha}}, \label{eq:app:gaussian_mt_simple}
\end{align}
which can also be written as,
\begin{align}
  m_t &= - a_t^2 + \tfrac{\beta}{\alpha}.
\end{align}
Thus, we have,
\begin{align}
a_t^2 &=  2\left[\log(t/\tau + 1) + \beta/\alpha -\log \alpha\right]\\
m_t &=  -2\left[\log(t/\tau + 1) + \beta/\alpha -\log \alpha\right] + \beta/\alpha\\
    &= -2\log(t /\tau + 1) - \beta/\alpha + 2\log\alpha.
\end{align}
\section{Proofs of Lemmas \ref{lemma:total_attn} and \ref{lemma:unnormalized_sparsity}}\label{app:proofs_lemma1_2}
\totalattnlemma*
\begin{proof}
We want to show that $\E{Z_{t_1}^{t_1 \Delta}} = \Theta(1)$ as $t_1 \rightarrow \infty$.
By definition and linearity of expectation:
\begin{align}
  \E{Z_{t_1}^{t_1 \Delta}} &= \E{\sum_{t=t_1}^{t_1\Delta - 1} \tilde{A}_t} = \sum_{t=t_1}^{t_1\Delta - 1} \E{\tilde{A}_t}.
\end{align}
Using the given condition $\E{\tilde{A}_t} = \frac{\alpha}{t/\tau + 1}$:
\begin{align}
  \E{Z_{t_1}^{t_1 \Delta}} &= \sum_{t=t_1}^{t_1\Delta - 1} \frac{\alpha}{t/\tau + 1} = \alpha \tau \sum_{t=t_1}^{t_1\Delta - 1} \frac{1}{t + \tau} = \alpha \tau \sum_{k=t_1+\tau}^{t_1\Delta - 1 + \tau} \frac{1}{k}.
\end{align}
Using the standard integral bounds for the harmonic sum derived by comparison with the integral (see Appendix~\ref{app:harmonic_bounds}):
\begin{align}
  \ln\left(\frac{t_1\Delta + \tau}{t_1+\tau}\right) \le \sum_{k=t_1+\tau}^{t_1\Delta - 1 + \tau} \frac{1}{k} \le \ln\left(\frac{t_1\Delta - 1 + \tau}{t_1+\tau - 1}\right).
\end{align}
As $t_1 \rightarrow \infty$:
\begin{align}
  \frac{t_1\Delta + \tau}{t_1+\tau} &\rightarrow \frac{t_1\Delta}{t_1} = \Delta \\
  \frac{t_1\Delta - 1 + \tau}{t_1+\tau - 1} &\rightarrow \frac{t_1\Delta}{t_1} = \Delta
\end{align}
So, the logarithm terms in both the lower and upper bounds approach $\ln(\Delta)$. By the Squeeze Theorem:
\begin{align}
  \sum_{k=t_1+\tau}^{t_1\Delta - 1 + \tau} \frac{1}{k} \rightarrow \ln(\Delta)
\end{align}
Since $\alpha$, $\tau$, and $\Delta > 1$ are constants, $\ln(\Delta)$ is a positive constant. Thus:
\begin{align}
  \E{Z_{t_1}^{t_1 \Delta}} \rightarrow \alpha \tau \ln(\Delta) = \Theta(1)
\end{align}
This satisfies the condition for scale-invariant total attention in expectation.
\end{proof}

\unnormalizedsparsitylemma*
\begin{proof}
We want to show that $\E{\tilde{N}_{t_1}^{t_1 \Delta}} = \Theta(1)$ as $t_1 \rightarrow \infty$.
By definition and linearity of expectation:
\begin{align}
  \E{\tilde N_{t_1}^{t_1 \Delta}} &= \E{\sum_{t=t_1}^{t_1 \Delta-1} \tilde{A}_t \log \tilde{A}_t} = \sum_{t=t_1}^{t_1 \Delta-1} \E{\tilde{A}_t \log \tilde{A}_t}
\end{align}
Using the given condition $\E{\tilde{A}_t \log \tilde{A}_t} = \frac{\beta}{t/\tau + 1}$:
\begin{align}
  \E{\tilde{N}_{t_1}^{t_1 \Delta}} &= \sum_{t=t_1}^{t_1 \Delta-1} \frac{\beta}{t/\tau + 1} = \beta \tau \sum_{t=t_1}^{t_1 \Delta-1} \frac{1}{t + \tau}
\end{align}
This sum is exactly the same form as in the proof of Lemma~\ref{lemma:total_attn}, just with $\beta$ instead of $\alpha$. Following the same steps using the integral bounds for the harmonic sum derived in Appendix~\ref{app:harmonic_bounds}:
\begin{align}
  \sum_{k=t_1+\tau}^{t_1\Delta - 1 + \tau} \frac{1}{k} \rightarrow \ln(\Delta) \quad \text{ as } \quad t_1 \rightarrow \infty
\end{align}
Therefore, as $t_1 \rightarrow \infty$:
\begin{align}
  \E{\tilde{N}_{t_1}^{t_1 \Delta}} \rightarrow \beta \tau \ln(\Delta) = \Theta(1)
\end{align}
This satisfies the condition for scale-invariant unnormalised attention sparsity in expectation.
\end{proof}

\section{Bounds for Harmonic Sums}
\label{app:harmonic_bounds}

For the proofs of Lemma~\ref{lemma:total_attn} and Lemma~\ref{lemma:unnormalized_sparsity}, we need bounds on the partial harmonic sum $S = \sum_{k=a}^b \frac{1}{k}$, where $a = t_1+\tau$ and $b = t_1\Delta - 1 + \tau$. We can obtain these bounds by comparing the sum to the integral of $f(x) = 1/x$.

Since $f(x) = 1/x$ is a decreasing function for $x > 0$, we have,
\begin{align}
    \int_{a}^{b+1} \frac{1}{x} dx \le \sum_{k=a}^{b} \frac{1}{k} \le \int_{a-1}^{b} \frac{1}{x} dx
\end{align}
Evaluating the integrals gives,
\begin{align}
    \ln\left(\frac{b+1}{a}\right) &\le \sum_{k=a}^{b} \frac{1}{k} \le \ln\left(\frac{b}{a-1}\right). \label{eq:integral_bounds}
\end{align}
\section{Proof of Theorem~\ref{theorem:scale_inv}}\label{app:proof_theorem1}
\thmscaleinv*
\begin{proof}
We need to show that the given conditions are sufficient for scale-invariant total attention (Definition~\ref{def:scale_inv_total_attn}) and scale-invariant unnormalised attention sparsity (Definition~\ref{def:scale_inv_unnorm_attn_sparsity}).

\textbf{1. Scale-invariant total attention:}
We need to show that $\E{Z_{t_1}^{t_1 \Delta}} = \Theta(1)$ as $t_1 \rightarrow \infty$.
By Lemma~\ref{lemma:total_attn}, this holds if $\E{\tilde{A}_t} = \frac{\alpha}{t/\tau + 1}$.
Since $L_t \sim \mathcal{N}(m_t, a_t^2)$, the unnormalised attention weight $\tilde{A}_t = e^{L_t}$ follows a log-normal distribution. The expectation of a log-normal variable $e^X$ where $X \sim \mathcal{N}(\mu, \sigma^2)$ is $e^{\mu + \sigma^2/2}$.
Therefore,
\begin{align}
  \E{\tilde{A}_t} &= \E{e^{L_t}} = e^{m_t + a_t^2/2}
\end{align}
Substituting the given expressions for $m_t = -a_t^2 + \beta/\alpha$ and $a_t^2 = 2[\log(t/\tau + 1) - \log \alpha + \beta / \alpha]$:
\begin{align}
  \E{\tilde{A}_t} &= e^{(-a_t^2 + \beta/\alpha) + a_t^2/2} \\
  &= e^{-a_t^2/2 + \beta/\alpha} \\
  &= e^{-[\log(t/\tau + 1) - \log \alpha + \beta / \alpha] + \beta/\alpha} \\
  &= e^{-\log(t/\tau + 1) + \log \alpha - \beta / \alpha + \beta/\alpha} \\
  &= e^{\log \alpha - \log(t/\tau + 1)} \\
  &= e^{\log\left(\frac{\alpha}{t/\tau + 1}\right)} \\
  &= \frac{\alpha}{t/\tau + 1}
\end{align}
Since this condition matches the requirement of Lemma~\ref{lemma:total_attn}, scale-invariant total attention holds in expectation. The condition $\beta \ge \alpha \log \alpha$ ensures $a_t^2 \ge 0$ for all $t \ge 0$.

\textbf{2. Scale-invariant unnormalised attention sparsity:}
We need to show that $\E{\tilde{N}_{t_1}^{t_1 \Delta}} = \Theta(1)$ as $t_1 \rightarrow \infty$.
By Lemma~\ref{lemma:unnormalized_sparsity}, this holds if $\E{\tilde{A}_t \log \tilde{A}_t} = \frac{\beta}{t/\tau + 1}$.
We need to compute $\E{\tilde{A}_t \log \tilde{A}_t} = \E{L_t e^{L_t}}$.
Using the formula for $\E{X e^X}$ where $X \sim \mathcal{N}(\mu, \sigma^2)$ from Appendix~\ref{app:integral}, which is $(\mu + \sigma^2) e^{\mu + \sigma^2/2}$:
\begin{align}
  \E{L_t e^{L_t}} &= (m_t + a_t^2) e^{m_t + a_t^2/2}
\end{align}
Substitute $m_t = -a_t^2 + \beta/\alpha$:
\begin{align}
  \E{L_t e^{L_t}} &= (-a_t^2 + \beta/\alpha + a_t^2) e^{m_t + a_t^2/2} \\
  &= (\beta/\alpha) e^{m_t + a_t^2/2}
\end{align}
From the previous step, we know $e^{m_t + a_t^2/2} = \frac{\alpha}{t/\tau + 1}$. Substituting this:
\begin{align}
  \E{L_t e^{L_t}} &= (\beta/\alpha) \left( \frac{\alpha}{t/\tau + 1} \right) \\
  &= \frac{\beta}{t/\tau + 1}
\end{align}
Since this condition matches the requirement of Lemma~\ref{lemma:unnormalized_sparsity}, scale-invariant unnormalised attention sparsity holds.

Therefore, under the given conditions, we have both scale-invariant total attention and scale-invariant unnormalised attention sparsity.
\end{proof}
\section{Further experimental details}\label{app:further_exp}
We provide experimental details here, and we provide the code used in the supplementary materials.
\subsection{Pretraining from scratch}

Our base model is a \texttt{modded-nanogpt}~\citep{modded_nanogpt_2024} variant, which is similar to GPT-2~\citep{radford2019language}, but has the following main differences from GPT-2: RMSNorm for layer normalization (applied before the attention and MLP blocks, as well as to the token embeddings and the final output layer), squared ReLU activations, and QK Normalization (RMSNorm on query and key projections). All models used a vocabulary size of 50304, with text tokenized with the GPT-2 tokenizer~\citep{wolf2020transformers}.  We implemented scale-invariant attention and ALiBi using FlexAttention~\citep{dong2024flex}.

\textbf{162M parameter model. } The smaller model had 12 layers, 768 hidden dimension, and 6 heads. We optimized embedding parameters using Adam with learning rate $\gamma=0.3$, $\beta=(0.9, 0.95)$. We optimized linear layers with Muon, with no weight decay, $\gamma=0.02$ and momentum $0.95$. For remaining parameters (unembedding and LogN scalings, if LogN trick was active), we optimized with Adam, using $\gamma=0.002$, $\beta=(0.9, 0.95)$. We trained for 4578 steps. The batch size was $8\times 65536 / L_\text{tr}$, with more gradient accumulation for shorter training lengths. We scheduled the learning rate with a linear schedule for all parameters, with no warmup, constant learning rate for 3270 steps, and linear cooldown for the remaining 1308 steps. We vary the training context length when pretraining with this model, from 4096(4k), to 16384(16k), and 65536(64k). We validate at 4k, 16k, and 64k sequence lengths every 125 steps.

\textbf{304M parameter model. } For the larger model, we used the same settings as the smaller model, but with the following changes. The larger model had 16 layers, 1024 hidden dimension, and 8 heads. We trained the model for 10900 steps, processing approximately 10B tokens. We trained for 2 accumulation steps on 4 GPUs, with 28 sequences per batch. We scaled learning rates following $\mu$Param~\citep{wortsman2023small}, which involves multiplying learning rates of the linear layers by $768/1024$, to adjust for changing the model width. Learning rates of other layers (embedding, unembedding, and LogN scalings if active) were not changed. We used a cosine learning rate schedule, with no warmup, and a minimum learning rate of 0 (at the end of training).  All runs on the 304M parameter models were with 4096 training sequence length. We validated at 4k, 16k, 64k sequence lengths every 250 steps.

\textbf{RoPE hyperparameters. } We used a base $\theta$ of $10,000$ for RoPE, and an effective base of $1024$ for the angular frequencies in $p$-RoPE. For RoPE+NTK scaling (note the scaling applies only when the inference sequence length is longer than the training sequence length), we scaled $\theta$ by $(L_\text{inf}/L_\text{tr})^{d/(d-2)}$, where $L_\text{tr}$ is the training sequence length, $L_\text{inf}$ is the inference sequence length, and $d$ is the RoPE head dimension (128 for both models).

\textbf{Dataset. } We used subsets of FineWeb~\citep{penedo2024fineweb} for pretraining from scratch: a 10B token subset for the 162M model, and a 100B token subset for the 304M model (from which $\sim$10B tokens were used for its training). We keep the training data ordering fixed, and we keep the validation set identical across all runs to reduce variance.

\textbf{Seeds. } We repeat the experiment for 3 different seeds when training at 4k on the 162M parameter model. We train with 1 seed at 4k on the 304M parameter model, and at 16k/64k on the 304M parameter model due to compute limitations.

\textbf{Compute. } We trained the smaller (162M) models on single A100 80G GPUs. We trained the 304M models on 4xH100 grace hopper nodes using distributed data parallelism.

\subsection{Needle-in-a-haystack}
The Needle in a Haystack experiments were conducted by fine-tuning the pre-trained 162M parameter models. We fine-tune with the same learning rate as above, but with 100 warmup, 100 constant, and 100 warmdown steps (decaying to $\gamma=0$). We use the same optimizers as in the pretraining phase. We train for 300 steps on sequences of length 4096, and batch size 8 with 8 accumulation steps. We repeated with 3 seeds.

The task is to generate responses of the form ``\texttt{city1=needle1;city2=needle2;city3=needle3}'', where the cities and needles are embedded uniformly at random into samples from C4 in the form ``\texttt{The special magic <city> number is <7\_digit\_number>.}''. We sample several times from the C4 dataset (concatenating samples) and remove tokens until we have the necessary number of tokens --- 4096(4k) for training, and 4096(4k), 16384(16k), 65536(64k) for validation. Only the expected response tokens are included in the loss (the prompt/context tokens are masked). We repeat the task for three different seeds. The validation accuracies presenting in Table~\ref{tab:final_losses} are calculated by the proportion of times that  cities {\it and\/} numbers are output correctly.

\subsection{Resources required to reproduce experiments}\label{sec:time_repro}
To reproduce results, 80G GPUs are required. We used 80G A100s, and 80G H100 grace hopper nodes.
In terms of time taken to execute each experiment type, we give estimates of the resources required:
\begin{itemize}[topsep=0pt, parsep=0pt, itemsep=1.5pt]
\item pretraining 162M parameter model w/o flex attention takes roughly 4/8/22 A100 hours at 4k/16k/64k;
\item pretraining 162M parameter model with flex attention takes roughly 8/16/44 A100 hours at 4k/16k/64k;
\item pretraining 304M parameter model takes roughly 36 4xH100 node hours;
\item fine-tuning on needle-in-a-haystack task takes roughly 3 A100 hours.
\end{itemize}
To obtain our results, each experiment is executed several times. In particular, needle-in-a-haystack with 3 seeds per method, pretraining at 4k with 162M model is 3 seeds per positional encoding method. The remaining experiments are executed once per positional encoding method. This gives a total of $\sim 100$ 4xH100 hours, $\sim 550$ 1xA100 hours. We estimate that very roughly that amount again was spent configuring the experiments correctly, and on preliminary/failed experiments.

\section{The optimal lengthscale, $\tau$, is around 10}\label{app:setting_lengthscale}
By introducing scale-invariant attention, we introduce one extra lengthscale hyperparameter, $\tau$, which represents the size of the `chunks' we attend over. To select $\tau$, we trained at 4k for ${\tau\in\{10^{-2},10^{-1},10^0, 10^1, 10^2\}}$, and compared validation losses (shown in Fig.~\ref{fig:select_tau}). Validation performance is very similar amongst different $\tau$ at the training context length (though $\tau=10$ is strictly best), but as we extend to out of distribution context lengths (64k, 16$\times$ the training context length) the benefit of $\tau=10$ becomes clearer.
\begin{figure}[t]
    \centering
    \includegraphics[width=\linewidth]{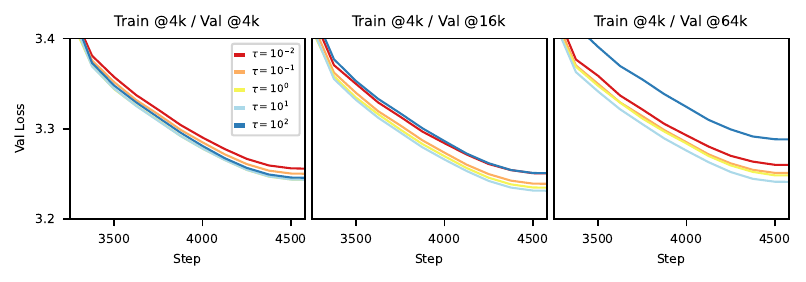}
    \caption{Validation losses at 4k (left), 16k (middle), and 64k (right) context lengths, for a GPT-2-like model trained with scale-invariant attention for varying $\tau$. The models were trained at 4k context length.}
    \label{fig:select_tau}
\end{figure}

\section{Extra Results}
\subsection{Entropy scaling of regular attention}
In the main text, we illustrated in Fig.~\ref{fig:sublog} that scale-invariant attention method under a Gaussian assumption has sub-logarithmic expected entropy. Fig.~\ref{fig:sublog_bounds_std} empirically shows that expected entropy in a range $t$ to $t\Delta$ for standard/unscaled attention instead scales logarithmically with $t$.
\begin{figure}
    \centering
    \includegraphics[width=\linewidth]{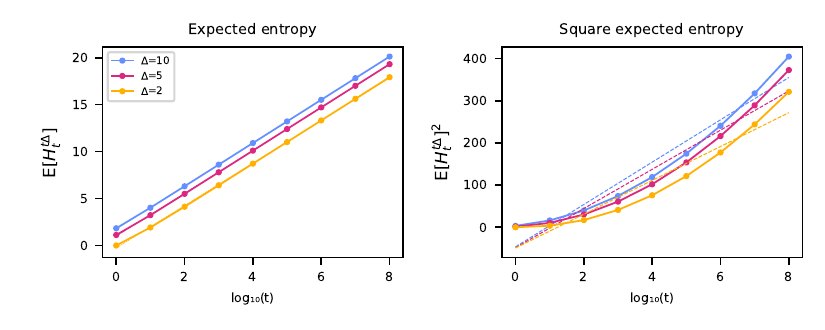}
    \caption{Scaling of expected entropy-in-range for standard attention with an independent Gaussian assumption on the logits. We calculate expected entropies in the range $[t, t\Delta]$ for different $t$ and $\Delta\in\{2, 5, 10\}$, with the lengthscale $\tau$ set to $10$. The dashed lines show the best linear fit of the data. We empirically find that standard Gaussian logits give logarithmic entropy (left panel).}
    \label{fig:sublog_bounds_std}
\end{figure}
\subsection{Pretrain @16k}
For completeness, we include validation losses when training at 16k in Fig.~\ref{fig:pretrain_16k}. We see again that scale-invariant $p$-RoPE outperforms other methods over a range of validation lengths, with the improvements becoming more noticable at the longest validation length of 64k.
\begin{figure}[h]
    \centering
    \includegraphics[width=\linewidth]{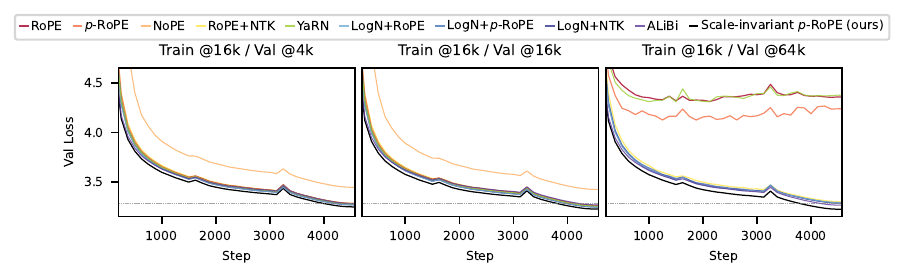}
    \caption{Validation losses at 4k / 16k / 64k context lengths, for a 162M parameter GPT-2-style model trained at 16k. NoPE omitted on the right-most plot to avoid excessive zooming.}
    \label{fig:pretrain_16k}
\end{figure}
\subsection{Alternative scale-invariant attention variants}\label{app:scale_invariant_comparisons}
In Section~\ref{sec:methods} we presented scale-invariant $p$-RoPE as our proposed method. In preliminary experiments however, it was very natural to also consider scale-invariant RoPE and scale-invariant NoPE. We show results when training at 4k in Fig.~\ref{fig:scale_invariant_rope_nope}. Scale-invariant RoPE performs almost as well as scale-invariant $p$-RoPE when evaluating at the training context length, but underperforms more as we move to 16k and 64k. On the other hand, scale-invariant NoPE underperforms scale-invariant $p$-RoPE, yet generalises to long contexts. We hypothesised that scale-invariant RoPE does not long-context generalise due to RoPE's low frequences (i.e. high wavelengths) interfering with the position-dependent scale-invariant transformation, $L_t \mapsto a_t L_t + m_t$.
\begin{figure}[t]
    \centering
    \includegraphics[width=\linewidth]{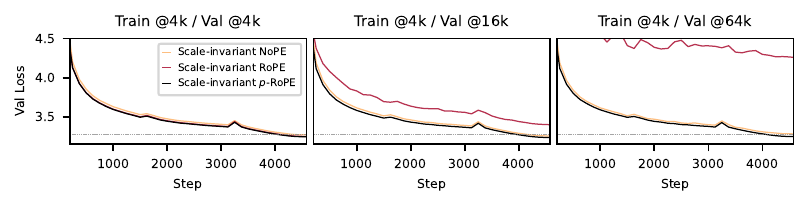}
    \caption{Validation losses at 4k / 16k / 64k of scale-invariant NoPE, RoPE, and $p$-RoPE, for a 162M parameter GPT-2-style model trained at 4k .}
    \label{fig:scale_invariant_rope_nope}
\end{figure}
\subsection{Infini-Attention}
During the review process, we were asked to compare scale-invariant attention to Infini-attention~\citep{munkhdalai2024leave}. Since infini-attention is a method for compressing the KV-cache, it is slightly different to the other methods we compare to (which primarily control entropy), and so we include these results separately. See Table~\ref{tab:infini_attn_results}, which shows that while infini-attention gives long-context generalization, it is not as strong as our method.
\begin{table}[t]
\small
\centering
\caption{Final mean validation losses ($\pm 1$ standard error across 3 seeds) for different methods when training with 4k context length on a 162M parameter GPT-2-style model.}\label{tab:infini_attn_results}
\begin{tabular}{lccc}
\toprule
Method & Val @ 4k & Val @ 16k & Val @ 64k \\
\midrule
RoPE & 3.261 $\pm$ 0.001 & 3.936 $\pm$ 0.010 & 5.260 $\pm$ 0.014 \\
Scale-invariant $p$-RoPE (ours) & \textbf{3.244} $\pm$ 0.001 & \textbf{3.235} $\pm$ 0.001 & \textbf{3.247} $\pm$ 0.001 \\
\midrule
Infini-RoPE & 3.296 $\pm$ 0.003 & 3.302 $\pm$ 0.004 & 3.310 $\pm$ 0.008 \\
Infini-$p$-RoPE & 3.295 $\pm$ 0.003 & 3.303 $\pm$ 0.005 & 3.311 $\pm$ 0.009 \\
\bottomrule
\end{tabular}
\end{table}
\subsection{Larger models}\label{app:larger_models}
We also investigated the ability for scale-invariant attention to generalise to longer contexts in larger models by continual pretraining Llama 2 7B~\citep{touvron2023llama}.
We chose Llama 2 for this experiment because it is one of the few models that have not already mid-trained at a longer context length.

Specifically, we continually pretrained at 4k context length after replacing the default attention mechanism with various other methods, and we looked at validation loss at out-of-distribution lengths (4k/16k/64k). We trained using the ~\cite{torchtune} library, using data from FineWeb~\citep{penedo2024fineweb}, with AdamW and a learning rate of $2\times 10^{-5}$.

We see in Table~\ref{tab:llama_val_loss} that changing the attention mechanism degraded the loss at the training context length, but the performance of scale-invariant $p$-RoPE far exceeds the other methods at 16k and 64k.
\begin{table}[t]
\centering
\caption{Validation performance when fine-tuning Llama-2 7B with different attention methods on $\sim$50M tokens. RoPE and RoPE+NTK (denoted *) were not fine-tuned.}
\label{tab:llama_val_loss}
\small
\begin{tabular}{lccc}
\toprule
Method & Val loss @4k & Val loss @16k & Val loss @64k \\
\midrule
RoPE* & 1.968 & 7.036 & 8.815 \\
$p$-RoPE & 2.029 & 3.504 & 6.800 \\
NoPE & 4.152 & 6.172 & 7.730 \\
RoPE+NTK* & 1.988 & 3.090 & 7.523 \\
YaRN & \textbf{1.957} & 7.004 & 8.763 \\
LogN+RoPE & 1.966 & 6.932 & 8.726 \\
LogN+$p$-RoPE & 2.049 & 2.984 & 6.224 \\
LogN+NTK & 1.966 & 3.063 & 7.413 \\
ALiBi & 2.750 & 2.745 & 2.744 \\
Scale-invariant $p$-RoPE (ours) & 2.163 & \textbf{2.193} & \textbf{2.252} \\
\bottomrule
\end{tabular}
\end{table}
\section{Checking Gaussianity of attention logits}
In our analysis in Section~\ref{sec:logit_characteristics} we assume that the unmodified logits (query-key products), $\bar{L}_t$'s, are standard Gaussian. In this section, we empirically verify that the logits are Gaussian by looking at QQ-plots.

We consider several sizes of model, including 1B and 8B Llama variants~\citep{grattafiori2024llama}, and Gemma 2 27B~\cite{team2024gemma}. We calculate logits with the introduction paragraph of a Wikipedia page~\footnote{\url{https://en.wikipedia.org/wiki/New_England}} as the input. In Figs.~\ref{fig:qq_plot_1b},\ref{fig:qq_plot_1b_post_rope},\ref{fig:qq_plot_8b},\ref{fig:qq_plot_8b_post_rope},\ref{fig:qq_plot_gemma},\ref{fig:qq_plot_gemma_post_rope} we show quantiles of $\{\bar{L}_t\}_{t > 0}$'s (i.e. lower triangular part of the $QK^T$ matrix) for each layer, aggregated over the input and the attention heads in each layer. Note that we do not aggregate over the `beginning of sequence' token, as it is an outlier attention sink~\citep{gu2024attention}.
\begin{figure}
    \centering
    \includegraphics[width=\linewidth]{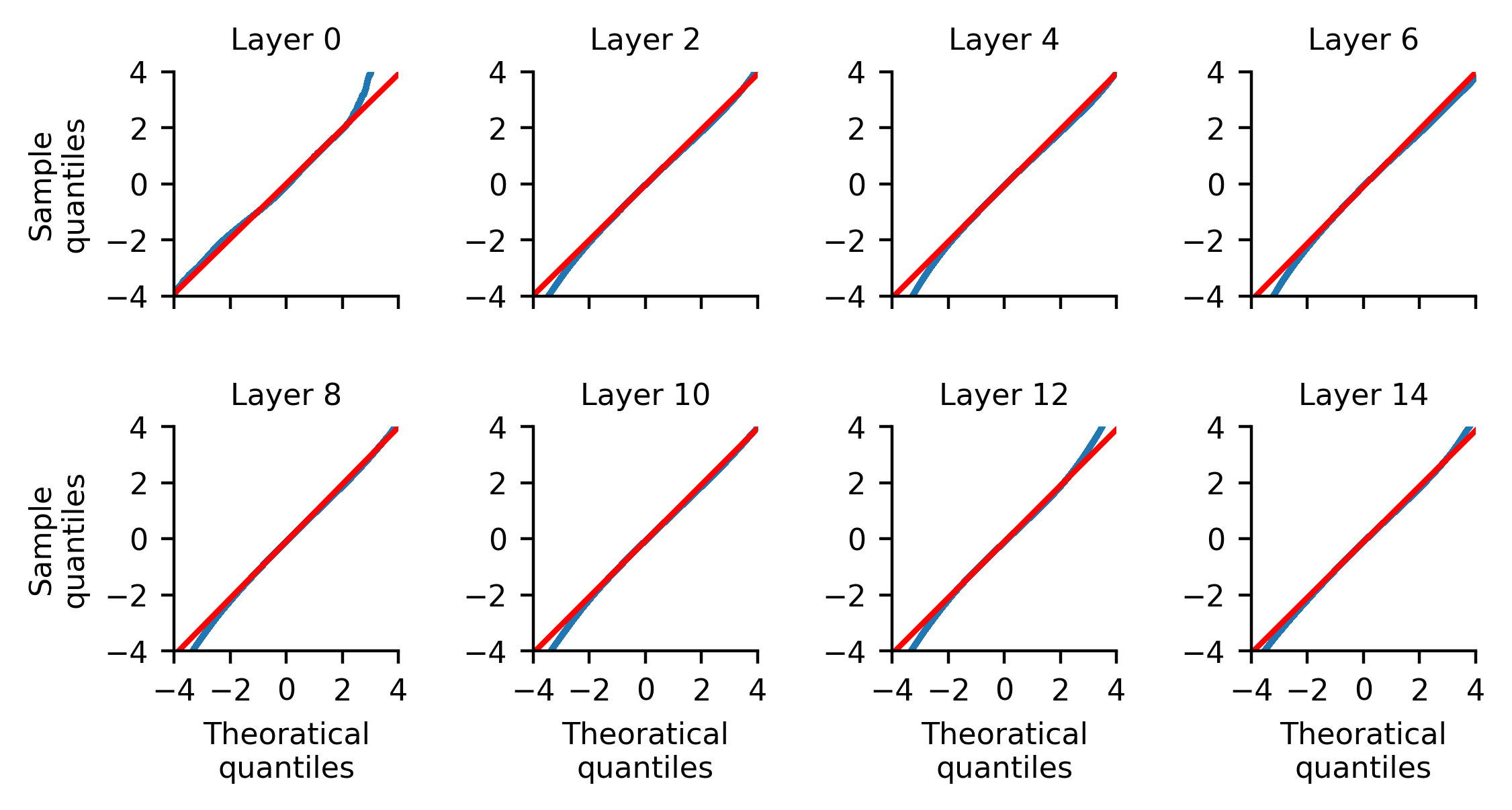}
    \caption{Quantile-Quantile (QQ) plots of attention logits (without RoPE applied) in a Llama-1B model (blue line), with theoretical quantiles of a Gaussian shown by the red line.}
    \label{fig:qq_plot_1b}
\end{figure}

\begin{figure}
    \centering
    \includegraphics[width=\linewidth]{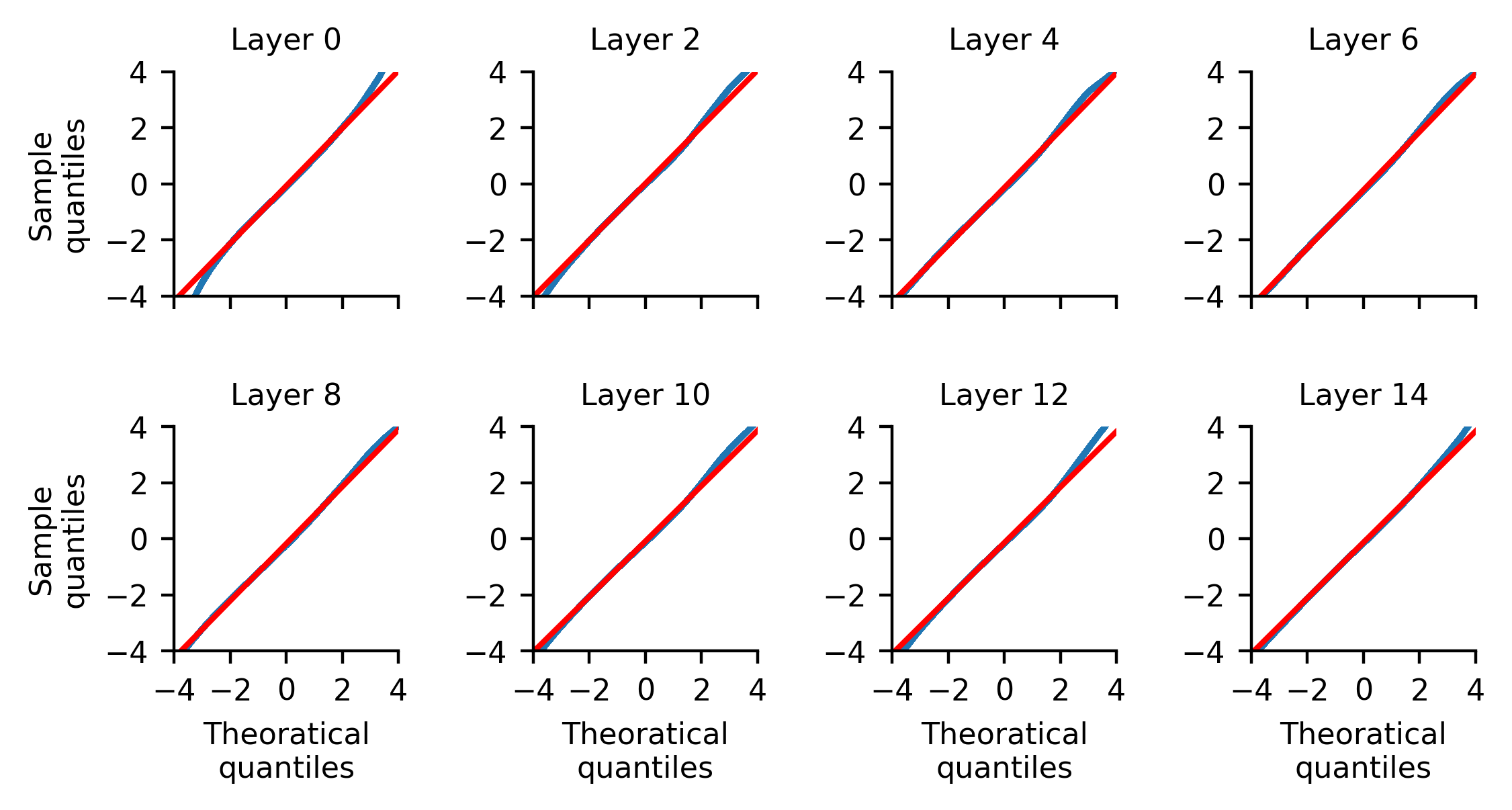}
    \caption{Quantile-Quantile (QQ) plots of attention logits (with RoPE applied) in a Llama-1B model (blue line), with theoretical quantiles of a Gaussian shown by the red line.}
    \label{fig:qq_plot_1b_post_rope}
\end{figure}
\begin{figure}
    \centering
    \includegraphics[width=\linewidth]{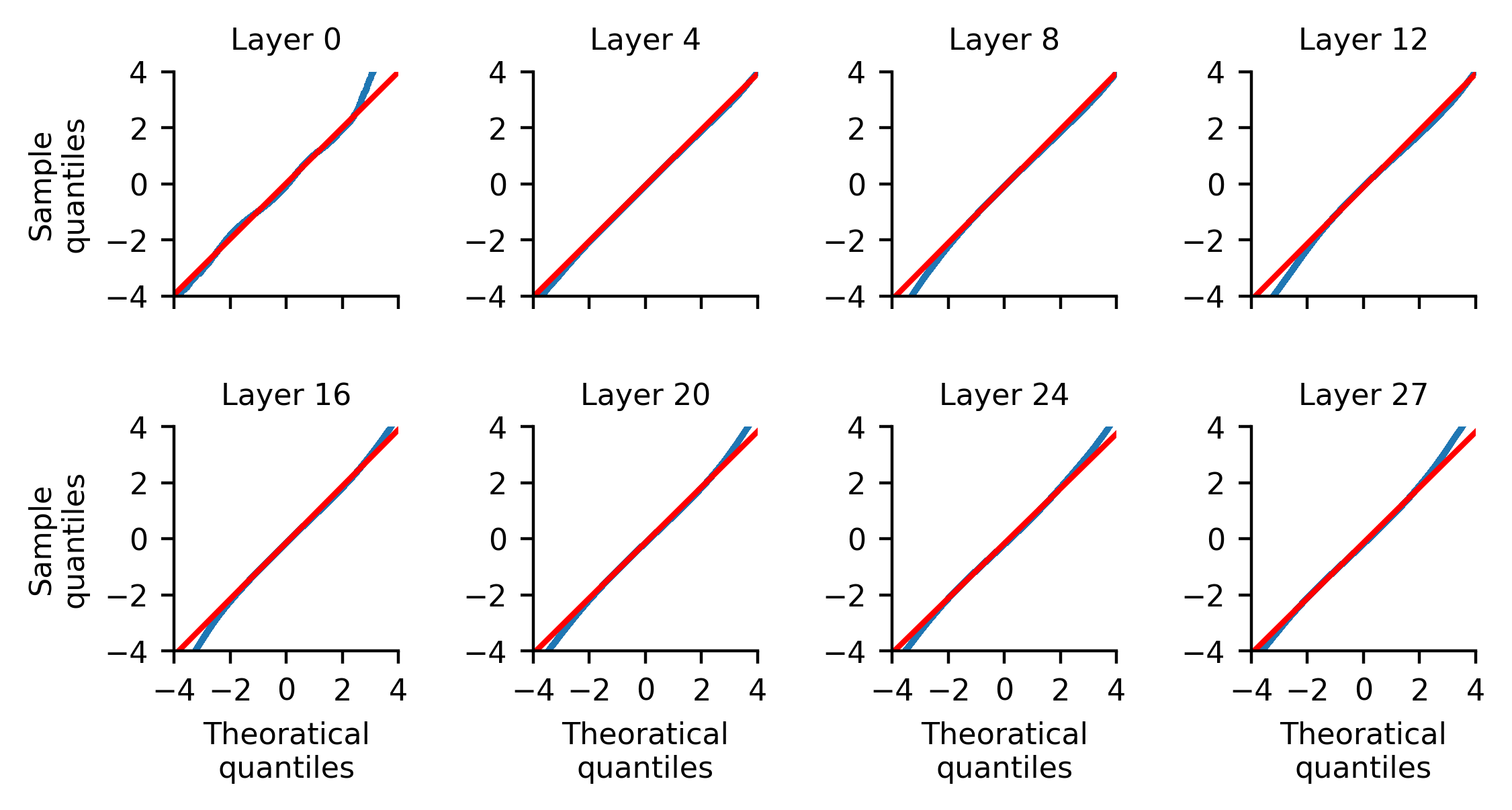}
    \caption{Quantile-Quantile (QQ) plots of attention logits (without RoPE applied) in a Llama-8B model (blue line), with theoretical quantiles of a Gaussian shown by the red line.}
    \label{fig:qq_plot_8b}
\end{figure}

\begin{figure}
    \centering
    \includegraphics[width=\linewidth]{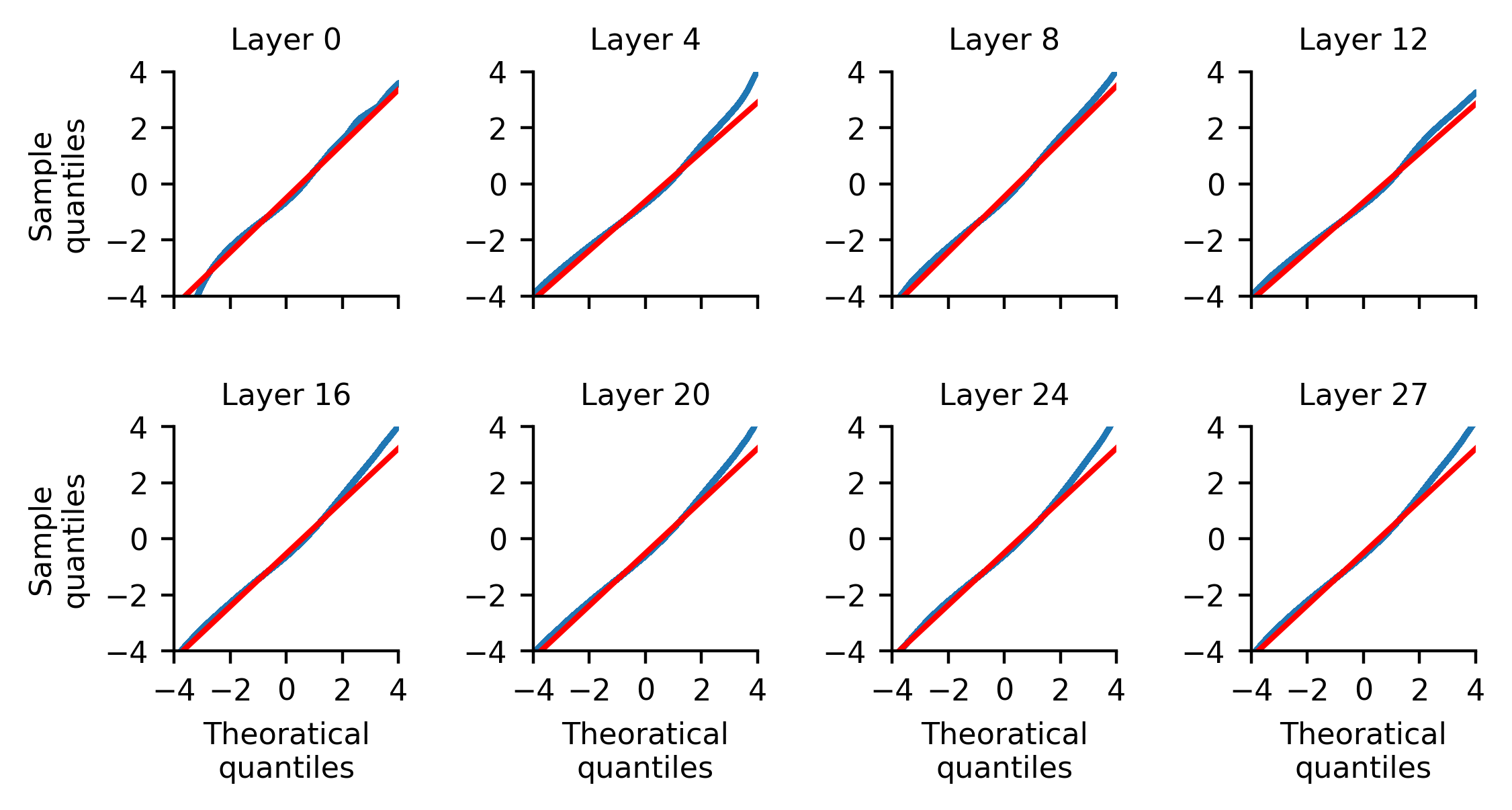}
    \caption{Quantile-Quantile (QQ) plots of attention logits (with RoPE applied) in a Llama-8B model (blue line), with theoretical quantiles of a Gaussian shown by the red line.}
    \label{fig:qq_plot_8b_post_rope}
\end{figure}
\begin{figure}
    \centering
    \includegraphics[width=\linewidth]{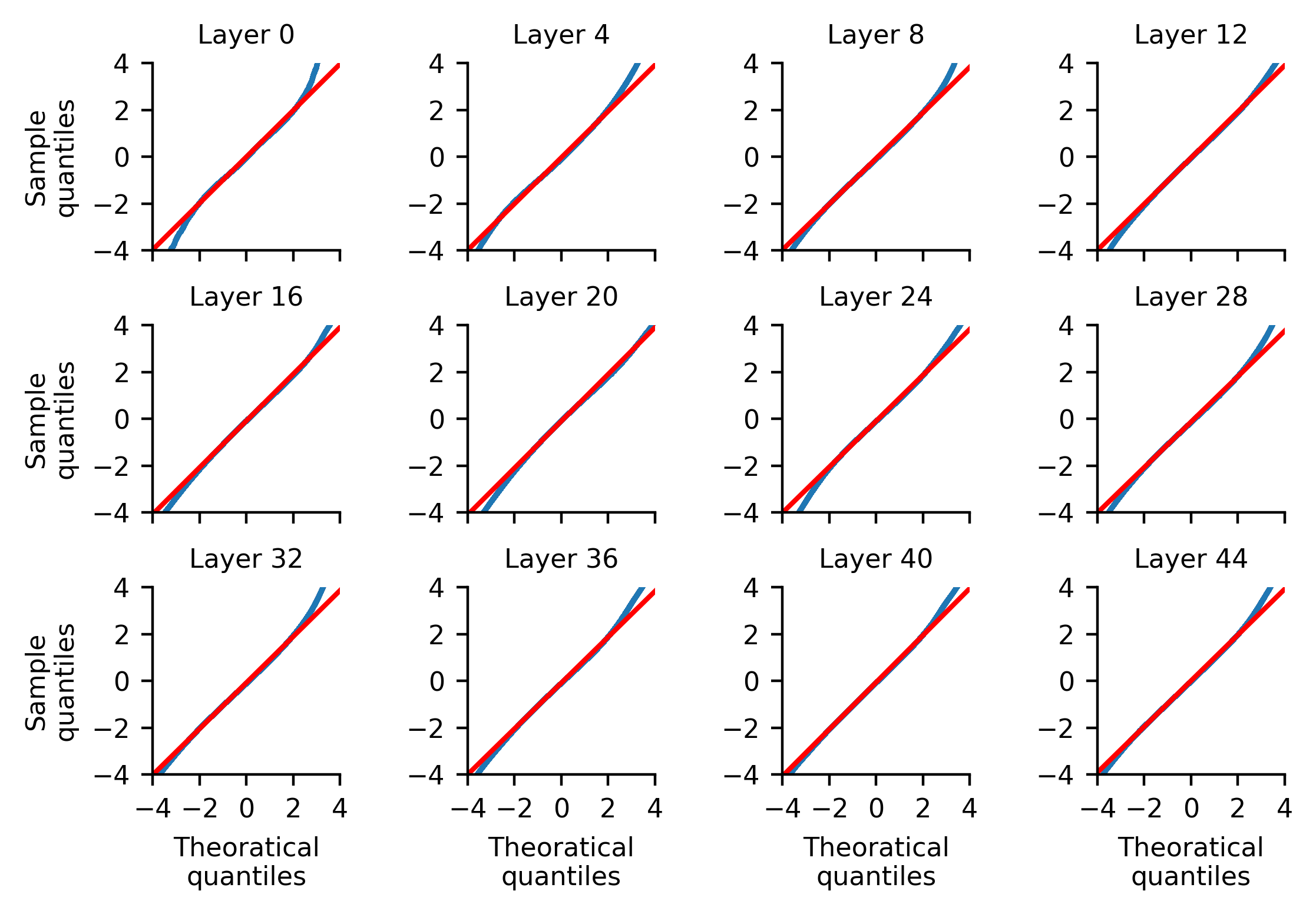}
    \caption{Quantile-Quantile (QQ) plots of attention logits (without RoPE applied) in a Gemma 2 27B model (blue line), with theoretical quantiles of a Gaussian shown by the red line.}
    \label{fig:qq_plot_gemma}
\end{figure}
\begin{figure}
    \centering
    \includegraphics[width=\linewidth]{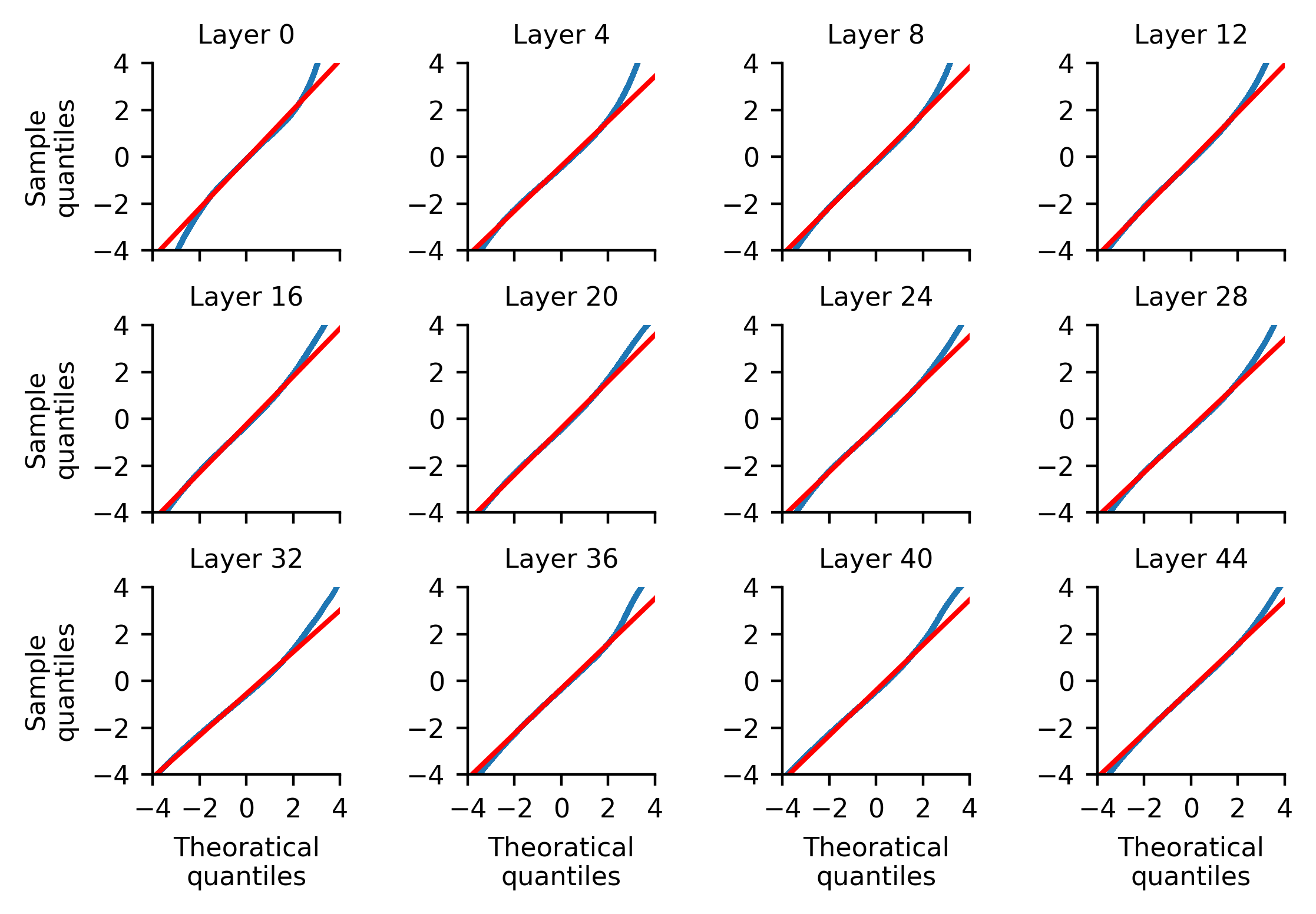}
    \caption{Quantile-Quantile (QQ) plots of attention logits (with RoPE applied) in a Gemma 2 27B model (blue line), with theoretical quantiles of a Gaussian shown by the red line.}
    \label{fig:qq_plot_gemma_post_rope}
\end{figure}

\FloatBarrier
\section{Licenses}
\begin{itemize}[topsep=0pt, parsep=0pt, itemsep=1.5pt]
\item This project uses a modified version of \texttt{modded-nanogpt}, \url{https://github.com/KellerJordan/modded-nanogpt} which is MIT licensed.
\item This project uses a 10B subset of the fineweb dataset, \url{https://huggingface.co/datasets/kjj0/fineweb10B-gpt2}, which is MIT licensed.
\item This project uses a 100B subset of the fineweb dataset, \url{https://huggingface.co/datasets/kjj0/fineweb100B-gpt2}, which is MIT licensed.
\item This project uses the C4 dataset, \url{https://huggingface.co/datasets/kjj0/fineweb100B-gpt2}, which licensed under the Open Civic Data Attribution License (OCD-BY).
\end{itemize}
\end{document}